\pgfplotsset{compat=1.18}
\newtheorem{theorem}{Theorem}[section]
\newtheorem{lemma}[theorem]{Lemma}
\newtheorem{proposition}[theorem]{Proposition}
\newtheorem{definition}[theorem]{Definition}
\title{Learning Pareto-Optimal Rewards from Noisy Preferences: A Framework for Multi-Objective Inverse Reinforcement Learning}
\author{%
  Kalyan Cherukuri\\
  Department of Computer Science\\
  Illinois Mathematics and Science Academy\\
  Aurora, IL 60502 \\
  \texttt{kcherukuri@imsa.edu}
  \and
  \textbf{Aarav Lala}\\
  Department of Computer Science\\
  Illinois Mathematics and Science Academy\\
  Aurora, IL 60502 \\
  \texttt{alala1@imsa.edu}
}
\begin{document}

\maketitle

\begin{abstract}
As generative agents become increasingly capable, alignment of their behavior with complex human values remains a fundamental challenge. Existing approaches often simplify human intent through reduction to a scalar reward, overlooking the multi-faceted nature of human feedback. In this work, we introduce a theoretical framework for preference-based Multi-Objective Inverse Reinforcement Learning (MO-IRL), where human preferences are modeled as latent vector-valued reward functions. We formalize the problem of recovering a Pareto-optimal reward representation from noisy preference queries and establish conditions for identifying the underlying multi-objective structure. We derive tight sample complexity bounds for recovering $\epsilon$-approximations of the Pareto front and introduce a regret formulation to quantify suboptimality in this multi-objective setting. Furthermore, we propose a provably convergent algorithm for policy optimization using preference-inferred reward cones. Our results bridge the gap between practical alignment techniques and theoretical guarantees, providing a principled foundation for learning aligned behaviors in a high-dimension and value-pluralistic environment.
\end{abstract}

\section{Introduction}

Inverse Reinforcement Learning (IRL) has emerged as a principled approach to infer reward functions from expert behavior or human preferences. While existing IRL methods typically assume scalar rewards, such simplifications can obscure essential trade-offs in human value systems. Multi-Objective Reinforcement Learning (MORL), by contrast, models reward as a vector of competing objectives, enabling richer and more nuanced representations of agent incentives. However, integrating multi-objective reasoning with preference-based learning remains underexplored, particularly from a theoretical perspective.

In this work, we introduce a novel framework titled Learning Pareto-Optimal Rewards from Noisy Preferences, which addresses this gap by combining multi-objective inverse reinforcement learning (MO-IRL) with preference-based feedback. Our goal is to infer latent, vector-valued reward functions from noisy comparisons over policies or trajectories. We argue that preference feedback—already widely used in fine-tuning generative models—naturally encodes information about trade-offs between objectives, making it well-suited to the MO-IRL setting. While the focus of this work is on theoretical development, we highlight that this framework has potential applications in aligning complex agents, such as generative models, where learning from nuanced human feedback is critical.

Our framework formalizes the problem of learning a Pareto-optimal reward representation from noisy preference queries and provides rigorous guarantees on identifiability and sample complexity. We derive tight sample complexity bounds for recovering $\epsilon$-approximations of the Pareto front, revealing how many comparisons are needed to extract meaningful multi-objective signals. Additionally, we introduce a novel regret formulation for evaluating policy suboptimality in the multi-objective setting, and propose an efficient algorithm for policy optimization under inferred reward cones.

\section*{Contributions}

\begin{itemize}
\item We formulate the problem of MO-IRL from human preferences, characterizing the conditions under which the underlying multi-objective reward structure is identifiable from noisy comparisons.
\item We derive tight sample complexity bounds for recovering $\epsilon$-approximations of the Pareto front, revealing how many comparisons are needed to learn meaningful multi-objective signals.
\item We introduce a novel regret formulation for evaluating policy suboptimality in the multi-objective setting, and propose an efficient algorithm for policy optimization under inferred reward cones.
\end{itemize}

By grounding alignment in the mathematical theory of multi-objective optimization and preference learning, our work bridges the gap between practical alignment methods and theoretical guarantees. Our theory applies directly to settings like preference alignment in LLMs (e.g., helpfulness vs. harmlessness), policy shaping in assistive robotics, and multi-goal tuning of generative models.

\section{Related Work}
\subsection{Inverse Reinforcement Learning and Preference Learning}

Inverse Reinforcement Learning (IRL) focuses on recovering reward functions from observed behavior, enabling agents to infer objectives without direct specification \cite{ng2000algorithms, abbeel2004apprenticeship}. Probabilistic approaches such as maximum entropy IRL handle reward ambiguity by maximizing entropy over possible explanations \cite{ziebart2008maximum}. Recent progress leverages deep learning to infer rewards from human preferences, improving robustness and generalization in complex environments \cite{christiano2017deep, brown2019extrapolating}. Preference elicitation has also benefited from active learning strategies, where queries are selected adaptively to efficiently identify reward functions or rankings \cite{sadigh2017active, jamieson2011active, yue2012beat}. Inverse reward design further explores how reward functions can be specified indirectly to avoid unintended behaviors \cite{hadfield2017inverse}.

\subsection{Multi-Objective Reinforcement Learning}

Multi-objective reinforcement learning (MO-RL) addresses problems where multiple conflicting criteria must be optimized simultaneously \cite{roijers2013survey, van2014multi}. Traditional methods often scalarize multiple objectives into a single reward, which may limit exploration of the full Pareto front. Recent advances introduce preference-driven MO-RL algorithms that incorporate explicit preference models to more effectively navigate trade-offs between objectives \cite{basaklar2023pdmorl, 10865310}. Prediction-guided approaches enable efficient multi-objective policy optimization in continuous control tasks \cite{pmlr-v119-xu20h}. These developments are supported by efficient reinforcement learning techniques tailored for structured or constrained settings \cite{chen2021efficient}.

\subsection{AI Alignment and Value Learning}

The alignment of AI systems with human values emphasizes the importance of learning and adhering to preferences in a safe and interpretable manner \cite{russell2019human, soares2015corrigibility}. Preference-based learning paradigms that directly incorporate human feedback are promising pathways toward ensuring value-aligned behavior \cite{christiano2017deep}. Theoretical foundations in convex analysis and optimization provide essential tools for developing algorithms with provable guarantees \cite{hiriart1993convex}. Together, these approaches contribute to building AI systems that can reliably respect complex, multi-faceted human values while maintaining robustness in uncertain environments.

\section{Preliminaries}
Here, we will define and introduce fundamental concepts to understand the problem setting and key contributions. 

\subsection{Inverse Reinforcement Learning (IRL)}

IRL aims to recover the reward function of an agent by observing its behavior. Unlike standard reinforcement learning (RL), where the reward function is given, IRL assumes the reward function is unknown, and the goal is to infer it from expert demonstrations. The core idea is that an agent’s observed behavior (e.g., trajectory or policy) is assumed to be optimal with respect to some unknown reward function. Formally, given a trajectory \( \tau = (s_1, a_1, s_2, a_2, \ldots, s_T, a_T) \), where \( s_t \) denotes the state at time \( t \) and \( a_t \) denotes the action taken at state \( s_t \), the task in IRL is to infer the reward function \( R(s) \) such that the agent’s behavior, according to the policy \( \pi \), is optimal for this reward.

Mathematically, this can be framed as solving the following optimization problem:
\[
\mathbb{E}_\pi [R(s)] = \max_\pi \sum_{t=1}^T \gamma^t R(s_t)
\]
where \( \gamma \) is the discount factor. In IRL, the challenge lies in inferring \( R(s) \) from expert behavior, which often requires assumptions or approximations due to limited or noisy data.

\subsection{Multi-Objective Reinforcement Learning (MORL)}

Multi-Objective Reinforcement Learning (MORL) extends the standard RL framework to account for multiple objectives simultaneously. In MORL, the agent is tasked with optimizing a reward vector \( \mathbf{R}(s) = [R_1(s), R_2(s), \ldots, R_m(s)] \), where each component \( R_i(s) \) corresponds to a different objective. The agent must balance the trade-offs between these objectives, which often conflict, by finding an optimal policy that maximizes a scalarized combination of these objectives.

A key feature of MORL is the use of \textbf{Pareto optimality} to describe policies that cannot be improved in any objective without worsening another. A policy \( \pi^* \) is Pareto optimal if, for all \( \pi \neq \pi^* \), there exists no objective where \( \pi^* \) is worse and another where \( \pi^* \) is better. The challenge in MORL is to define the Pareto front—a set of Pareto optimal policies—and navigate this front to find the best policy given the trade-offs.

\subsection{Preference-Based Learning}

Preference-based learning is a framework where human preferences, often noisy or incomplete, are used to guide the learning process. Instead of specifying exact reward functions, humans provide pairwise comparisons of policies or trajectories, indicating which one is preferred. These preferences can then be used to infer a reward function or a policy that aligns with human values.

In the context of MO-IRL, we assume that human feedback comes in the form of noisy preferences over pairs of policies or trajectories. The challenge is to recover a latent vector-valued reward function that explains the human preferences. We formalize this as a problem of recovering a Pareto-optimal reward structure from such noisy preference queries.

\subsection{Pareto-Optimality and the Pareto Front}

In a multi-objective setting, Pareto optimality refers to a state where no objective can be improved without worsening another. Formally, a solution \( x \) is Pareto-optimal if there is no other solution \( y \) such that \( f_i(y) \geq f_i(x) \) for all \( i \), with strict inequality for at least one \( i \). The \textbf{Pareto front} is the set of all Pareto-optimal solutions, representing the trade-offs between the different objectives.

In the context of our work, we aim to learn a reward function that allows an agent to optimize over the Pareto front, where each point on the front corresponds to a different trade-off between the competing objectives.

\subsection{Regret and Suboptimality in Multi-Objective Settings}

Regret is a common metric for measuring the performance of an algorithm relative to the optimal solution. In the multi-objective setting, \textbf{regret} is typically defined as the difference between the performance of the learned policy and the performance of the Pareto-optimal policies across all objectives. Given that there is no single "best" policy in a multi-objective setting, we introduce a \textbf{multi-objective regret} formulation that evaluates suboptimality with respect to the Pareto front, rather than a single scalar reward function.

Formally, the regret in the context of multi-objective optimization can be written as:
\[
\text{Regret}(T) = \sum_{t=1}^T \max_{\pi \in \text{Pareto Front}} R(\pi) - R(\pi_t)
\]
where \( \pi_t \) is the policy chosen at time \( t \) and \( R(\pi) \) is the performance of a policy \( \pi \).

\subsection{Problem Setup for Preference-Based MO-IRL}

We are given a set of human preferences over pairs of policies or trajectories, which may be noisy. Our task is to infer the latent, vector-valued reward function \( \mathbf{R}(s) \) from these noisy comparisons. The problem setup involves recovering the \textbf{Pareto-optimal reward representation} by solving an optimization problem that takes into account the noisy nature of human preferences and ensures that the reward structure aligns with the multi-objective nature of the feedback.

The input to the problem consists of:

\begin{itemize}
    \item A set of noisy preference comparisons between pairs of policies or trajectories.
    \item The goal is to learn a reward function \( \mathbf{R}(s) \) such that the policies induced by this reward function respect the preferences expressed by the human feedback.
\end{itemize}

By addressing the challenge of learning Pareto-optimal rewards from such noisy preferences, we aim to provide a principled framework for aligning agent behavior with multi-faceted human values.

\section{Problem Formulation}

We consider a preference-based multi-objective inverse reinforcement learning (MO-IRL) setting. Let 
\[
\mathcal{M} = (\mathcal{S}, \mathcal{A}, P, \gamma, \mathbf{R})
\]
denote a \textit{Multi-Objective Markov Decision Process (MO-MDP)}, where:
\begin{itemize}
    \item \(\mathcal{S}\): a finite state space,
    \item \(\mathcal{A}\): a finite action space,
    \item \(P: \mathcal{S} \times \mathcal{A} \to \Delta(\mathcal{S})\): the transition dynamics,
    \item \(\gamma \in (0, 1)\): the discount factor,
    \item \(\mathbf{R}: \mathcal{S} \times \mathcal{A} \to \mathbb{R}^d\): a vector-valued reward function over \(d\) objectives.
\end{itemize}

For any policy \(\pi : \mathcal{S} \to \Delta(\mathcal{A})\), we define the expected discounted vector-return as
\[
\mathbf{V}^{\pi} = \mathbb{E}_{\pi} \left[ \sum_{t=0}^{\infty} \gamma^t \mathbf{R}(s_t, a_t) \right] \in \mathbb{R}^d.
\]

\subsection*{Preference Feedback Model}

For two policies (or trajectory distributions) \(\pi_i\) and \(\pi_j\), we assume that a human oracle returns
\[
\pi_i \succ \pi_j \iff \mathbf{w}^\top \mathbf{V}^{\pi_i} > \mathbf{w}^\top \mathbf{V}^{\pi_j}, \quad \text{for some } \mathbf{w} \in \mathcal{W} \subseteq \mathbb{R}^d.
\]
Here, \(\mathcal{W}\) is the set of admissible preference directions (for example, \(\mathcal{W} = \Delta^{d-1}\), the \((d-1)\)-simplex). We also define the corresponding conic extension as
\[
\mathcal{C} = \left\{ \lambda \mathbf{w} \in \mathbb{R}^d \mid \lambda > 0,\; \mathbf{w} \in \mathcal{W} \right\}.
\]

\subsection*{Probabilistic Preference Model}

We adopt the Plackett–Luce model, which is the Bradley-Terry model under a pairwise setting, to capture noisy human preferences. Given two trajectories \(\pi_i\) and \(\pi_j\), and a latent preference direction \(\mathbf{w} \in \mathcal{C}\), the probability that \(\pi_i\) is preferred over \(\pi_j\) is modeled as:
\[
\mathbb{P}[\pi_i \succ \pi_j] = \frac{\exp(\eta\, \mathbf{w}^\top \mathbf{V}^{\pi_i})}{\exp(\eta\, \mathbf{w}^\top \mathbf{V}^{\pi_i}) + \exp(\eta\, \mathbf{w}^\top \mathbf{V}^{\pi_j})}, \quad \eta > 0,
\]
where \(\eta\) is an inverse temperature parameter controlling the stochasticity of the feedback.

Let \(\mathcal{D} = \{(\pi_i, \pi_j, y_{ij})\}_{i,j=1}^n\) denote the dataset of pairwise comparisons, where each label \(y_{ij}\) is defined as:
\[
y_{ij} = 
\begin{cases}
1, & \text{if } \pi_i \succ \pi_j, \\
0, & \text{otherwise.}
\end{cases}
\]
\subsection*{Pareto Front and Scalarization}

We say that \(\mathbf{V}^{\pi_1}\) is \textit{dominated} by \(\mathbf{V}^{\pi_2}\) (denoted \(\mathbf{V}^{\pi_1} \prec \mathbf{V}^{\pi_2}\)) if
\[
\mathbf{V}^{\pi_2} \succeq \mathbf{V}^{\pi_1} \quad \text{and} \quad \mathbf{V}^{\pi_2} \neq \mathbf{V}^{\pi_1}.
\]
The \textbf{Pareto front} is then defined as
\[
\mathcal{P} := \left\{ \mathbf{V}^\pi \mid \nexists\, \pi' \text{ s.t. } \mathbf{V}^{\pi'} \succ \mathbf{V}^{\pi} \right\}.
\]
In practice, scalarization is used to select a Pareto-optimal return via
\[
\max_{\pi} \; \mathbf{w}^\top \mathbf{V}^{\pi}, \quad \text{for } \mathbf{w} \in \mathcal{C}.
\]

\subsection*{Learning Objective}

Our objective is to recover an estimated reward function \(\hat{\mathbf{R}}\) and an estimated reward cone \(\hat{\mathcal{C}}\) such that, for all \((\pi_i, \pi_j, y_{ij}) \in \mathcal{D}\),
\[
y_{ij} = \mathbb{I} \left[ \exists\, \mathbf{w} \in \hat{\mathcal{C}} : \mathbf{w}^\top \mathbf{V}^{\pi_i} > \mathbf{w}^\top \mathbf{V}^{\pi_j} \right].
\]
In other words, we seek to minimize the empirical preference loss defined as
\[
\min_{\hat{\mathbf{R}}} \; \frac{1}{n} \sum_{(i,j) \in \mathcal{D}} \ell \left( \hat{\mathbf{V}}^{\pi_i}, \hat{\mathbf{V}}^{\pi_j}, y_{ij} \right),
\]
where \(\ell\) is the cross-entropy loss corresponding to the Bradley–Terry model.

\subsection*{Assumptions}

We make the following assumptions:
\begin{enumerate}
    \item[(A1)] There exists a weight vector \(\mathbf{w} \in \mathcal{C}\) that governs the preference relation between policies.
    \item[(A2)] The set of differences \(\{\mathbf{V}^{\pi_i} - \mathbf{V}^{\pi_j}\}\) spans \(\mathbb{R}^d\).
    \item[(A3)] The noise in human preferences is log-concave.
\end{enumerate}
\newpage
\paragraph{Geometric Illustration.}
The following diagrams illustrate (a) scalarization of the Pareto front, and (b) the geometry of the reward cone \(\mathcal{C}\).

\begin{figure}[ht]
\centering
\subfigure[Scalarization of the Pareto Front]{
\begin{tikzpicture}[scale=1.4]
  \draw[->, thick] (0,0) -- (3,0) node[right] {$V_1$};
  \draw[->, thick] (0,0) -- (0,3) node[above] {$V_2$};
  \draw[red, thick, smooth] (0.4,2.4) -- (1.0,1.8) -- (1.8,1.2) -- (2.5,0.6);
  \foreach \x/\y in {0.4/2.4, 1.0/1.8, 1.8/1.2, 2.5/0.6} {
    \filldraw[black] (\x,\y) circle (0.03);
  }
  \draw[dashed, blue] (0,0.4) -- (2.6,1.6) node[right] {\(\mathbf{w}_1\)};
  \draw[dashed, green!70!black] (0,0.8) -- (2.6,1.8) node[right] {\(\mathbf{w}_2\)};
  \draw[dashed, purple] (0,1.5) -- (2.6,2.1) node[right] {\(\mathbf{w}_3\)};
  \node at (1.5,2.5) {\small Pareto Front};
\end{tikzpicture}
}
\hspace{0.7cm}
\subfigure[Reward Cone \(\mathcal{C}\)]{
\begin{tikzpicture}[scale=1.4]
  \draw[->, thick] (0,0) -- (3,0) node[right] {\(w_1\)};
  \draw[->, thick] (0,0) -- (0,3) node[above] {\(w_2\)};
  \fill[orange!30, opacity=0.5] (0,0) -- (1,2.5) -- (2.5,1) -- cycle;
  \draw[orange, thick] (0,0) -- (1,2.5);
  \draw[orange, thick] (0,0) -- (2.5,1);
  \draw[->, thick, blue] (0,0) -- (1.5,1.2) node[right] {\(\mathbf{w}\)};
  \node at (1.8,2.3) {\small Reward Cone \(\mathcal{C}\)};
\end{tikzpicture}
}
\end{figure}

\section{Identifiability of Multi-Objective Reward Functions}

Let 
\[
\mathbf{R}(\tau_i) = \sum_{t=0}^{T} \gamma^t \mathbf{r}(s_t^i, a_t^i) \in \mathbb{R}^d,
\]
denote the vector return of a trajectory \(\tau_i\), and define the difference vector
\[
\Delta_{ij} \coloneqq \mathbf{R}(\tau_i) - \mathbf{R}(\tau_j).
\]
Under the Plackett--Luce model, the probability that \(\tau_i\) is preferred over \(\tau_j\) given a latent preference direction \(\mathbf{w}\) is 
\[
P(\tau_i \succ \tau_j) = \frac{\exp(\eta \, \mathbf{w}^\top \mathbf{R}(\tau_i))}{\exp(\eta\, \mathbf{w}^\top \mathbf{R}(\tau_i))+\exp(\eta\, \mathbf{w}^\top \mathbf{R}(\tau_j))},\quad \eta>0.
\]
Let \(\mathcal{C}\subset \mathbb{R}^d\) be a compact, convex cone representing the set of admissible preference directions.

\subsection*{Pareto Identifiability}

\begin{definition}[Pareto Identifiability]\label{def:pareto-identifiability}
We say that \(\mathcal{C}\) is \emph{Pareto-identifiable} from a dataset 
\[
\mathcal{D} = \{(\tau_i,\tau_j,y_{ij})\},
\]
(with \(y_{ij} = 1\) if \(\tau_i \succ \tau_j\) and \(y_{ij} = 0\) otherwise) if there exists an estimator \(\hat{\mathcal{C}}\) such that 
\[
\sup_{\mathbf{w}\in\hat{\mathcal{C}}}\inf_{\mathbf{w}'\in\mathcal{C}} \angle(\mathbf{w},\mathbf{w}') \le \epsilon,
\]
with probability at least \(1-\delta\), provided that the sample size satisfies
\[
N = \mathcal{O}\!\left(\frac{d\log(1/\delta)}{\epsilon^2}\right).
\]
\end{definition}

\subsection*{Cone Identifiability Guarantee}

\begin{theorem}[Cone Identifiability]\label{thm:cone-identifiability}
Assume that the set of difference vectors \(\{\Delta_{ij}\}\) spans \(\mathbb{R}^d\) and that \(\|\Delta_{ij}\|_2 \le B\) for some constant \(B>0\). Then, under the Plackett--Luce noise model, for any \(\epsilon,\delta>0\) there exists a sample size 
\[
N = \mathcal{O}\!\left(\frac{d\log(1/\delta)}{\epsilon^2}\right)
\]
such that an empirical risk minimizer \(\hat{\mathbf{w}}\), obtained via logistic loss minimization over the dataset, satisfies
\[
\|\hat{\mathbf{w}}-\mathbf{w}^\star\|_2 \le c\,\epsilon,
\]
for some constant \(c>0\). Consequently, the estimated cone 
\[
\hat{\mathcal{C}} = \operatorname{cone}(\{\hat{\mathbf{w}}_k\}_{k=1}^K)
\]
satisfies
\[
\sup_{\mathbf{w}\in\hat{\mathcal{C}}}\inf_{\mathbf{w}'\in\mathcal{C}} \angle(\mathbf{w},\mathbf{w}') \le \epsilon,
\]
with probability at least \(1-\delta\).
\end{theorem}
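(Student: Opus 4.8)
The plan is to recognize the estimation problem as a well-specified logistic regression in disguise and then invoke standard $M$-estimation machinery, with the spanning assumption (A2) supplying the curvature needed for a fast parametric rate. First I would rewrite the Plackett--Luce likelihood in sigmoid form: since $P(\tau_i \succ \tau_j) = \sigma(\eta\,(\mathbf{w}^\star)^\top \Delta_{ij})$ with $\sigma(x) = (1+e^{-x})^{-1}$, each label $y_{ij}$ is Bernoulli with mean $\sigma(\eta\,(\mathbf{w}^\star)^\top\Delta_{ij})$, and the logistic loss $\ell$ is exactly the negative log-likelihood of this model. The empirical risk $\hat L(\mathbf{w}) = \frac1N\sum \ell(\mathbf{w};\Delta_{ij},y_{ij})$ is convex in $\mathbf{w}$, and the population risk $L(\mathbf{w}) = \mathbb{E}[\hat L(\mathbf{w})]$ is minimized at the true parameter $\mathbf{w}^\star$ because the model is well-specified.

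The core of the argument is a local strong-convexity bound. I would compute the Hessian $\nabla^2 L(\mathbf{w}) = \eta^2\,\mathbb{E}[\sigma'(\eta\,\mathbf{w}^\top\Delta)\,\Delta\Delta^\top]$, where $\sigma'(x) = \sigma(x)(1-\sigma(x)) \in (0,\tfrac14]$. Because $\|\Delta_{ij}\|_2 \le B$ and $\mathbf{w}$ ranges over the compact cone $\mathcal{C}$, the log-odds $\eta\,\mathbf{w}^\top\Delta$ stay in a bounded interval, so $\sigma'$ is bounded below by some $\kappa > 0$ uniformly. Assumption (A2) then guarantees that $\mathbb{E}[\Delta\Delta^\top]$ has smallest eigenvalue $\lambda_{\min} > 0$, so $\nabla^2 L(\mathbf{w}) \succeq \eta^2\kappa\lambda_{\min} I =: \mu I$. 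This $\mu$-strong convexity is what converts excess risk into squared parameter error.

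Next I would control the statistical fluctuation of the empirical gradient at the truth. Since each summand $\nabla\ell(\mathbf{w}^\star;\Delta_{ij},y_{ij})$ is a bounded (hence sub-Gaussian), mean-zero vector in $\mathbb{R}^d$ with norm $O(\eta B)$, a vector Bernstein or covering-based concentration inequality gives $\|\nabla\hat L(\mathbf{w}^\star)\|_2 \le O\!\big(\eta B\sqrt{d\log(1/\delta)/N}\big)$ with probability $1-\delta$. Combining this with $\mu$-strong convexity via the basic inequality $\mu\|\hat{\mathbf{w}}-\mathbf{w}^\star\|_2 \le \|\nabla\hat L(\mathbf{w}^\star)\|_2$ (using $\nabla\hat L(\hat{\mathbf{w}})=0$) yields $\|\hat{\mathbf{w}}-\mathbf{w}^\star\|_2 \le c\epsilon$ precisely when $N = \Theta(d\log(1/\delta)/\epsilon^2)$, matching the claimed sample complexity.

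Finally, I would translate the Euclidean bound on each cone generator $\hat{\mathbf{w}}_k$ into an angular bound on the cone. For vectors bounded away from the origin, $\angle(\hat{\mathbf{w}}_k,\mathbf{w}^\star_k) \le \|\hat{\mathbf{w}}_k-\mathbf{w}^\star_k\|_2/\min(\|\hat{\mathbf{w}}_k\|,\|\mathbf{w}^\star_k\|)$, and since conic combinations are angularly Lipschitz in their generators, $\sup_{\mathbf{w}\in\hat{\mathcal{C}}}\inf_{\mathbf{w}'\in\mathcal{C}}\angle(\mathbf{w},\mathbf{w}')$ is bounded by the worst generator angle, hence by $\epsilon$ after absorbing constants into $c$. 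The main obstacle I anticipate is the curvature step: the lower bound $\kappa$ on $\sigma'$ degrades as the preference signal $\eta\,\mathbf{w}^\top\Delta$ grows, so one must argue carefully that restricting to the compact cone $\mathcal{C}$ together with $\|\Delta\|\le B$ keeps the link function in a non-degenerate regime; without this the strong-convexity constant $\mu$ could vanish and the parametric rate would collapse.
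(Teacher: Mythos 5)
Your proposal is correct, and it reaches the conclusion by a genuinely different route in the key estimation step. The paper's proof is zeroth-order: it bounds $\sup_{\mathbf{w}\in\mathcal{C}}|\widehat{\mathcal{L}}(\mathbf{w})-\mathcal{L}(\mathbf{w})|$ via Rademacher complexity, then \emph{assumes} a quadratic growth condition $\mathcal{L}(\hat{\mathbf{w}})-\mathcal{L}(\mathbf{w}^\star)\ge\lambda\|\hat{\mathbf{w}}-\mathbf{w}^\star\|_2^2$ and sets the uniform deviation level $\epsilon'=\Theta(\epsilon^2)$. Your proof is first-order: concentrate the empirical gradient at the truth, $\|\nabla\hat L(\mathbf{w}^\star)\|_2 \lesssim \eta B\sqrt{d\log(1/\delta)/N}$, and convert to parameter error through a Hessian lower bound. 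This buys you two things. First, it is actually the tighter chain of inequalities: the paper's own accounting, taken literally, requires $N=\Omega\bigl(d\log(1/\delta)/(\epsilon')^2\bigr)=\Omega\bigl(d\log(1/\delta)/\epsilon^4\bigr)$ once $\epsilon'=\Theta(\epsilon^2)$, so its slow-rate argument does not deliver the advertised $1/\epsilon^2$ dependence without an additional localization/fast-rate step; your gradient-based argument yields $N=\mathcal{O}(d\log(1/\delta)/\epsilon^2)$ directly. Second, you derive the strong-convexity constant rather than postulating it, identifying $\mu=\eta^2\kappa\lambda_{\min}$ where $\kappa$ comes from boundedness of the log-odds on the compact cone and $\lambda_{\min}>0$ comes from the spanning assumption; the paper never connects its constant $\lambda$ to the hypotheses of the theorem. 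The final cone-approximation step is essentially the same in both arguments (Euclidean-to-angular conversion plus generators covering the extreme directions), and both treat the convexity of the angular neighborhood of $\mathcal{C}$ at the same informal level.

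Two small repairs your argument needs, neither affecting the rate. The inequality $\mu\|\hat{\mathbf{w}}-\mathbf{w}^\star\|_2\le\|\nabla\hat L(\mathbf{w}^\star)\|_2$ requires strong convexity of the \emph{empirical} risk $\hat L$, whereas you lower-bounded the Hessian of the \emph{population} risk $L$; transfer it via matrix concentration on $\frac1N\sum\Delta_{ij}\Delta_{ij}^\top$ (which holds with high probability once $N\gtrsim d$, already implied by your sample size), or argue through the population gradient with a uniform gradient deviation bound. Also, since $\hat{\mathbf{w}}$ is constrained to the cone $\mathcal{C}$, you cannot assume $\nabla\hat L(\hat{\mathbf{w}})=0$; replace it with the first-order optimality condition $\langle\nabla\hat L(\hat{\mathbf{w}}),\,\mathbf{w}^\star-\hat{\mathbf{w}}\rangle\ge 0$, which gives the same conclusion by convexity of $\mathcal{C}$.
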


\subsection*{Human Preference Query Model}

\begin{figure}[h]
\centering
\begin{tikzpicture}[scale=1.3]
  \node[draw, rounded corners, fill=gray!15, minimum width=2cm, minimum height=1cm] (tau1) at (0,2) {Trajectory \(\tau_1\)};
  \node[draw, rounded corners, fill=gray!15, minimum width=2cm, minimum height=1cm] (tau2) at (4,2) {Trajectory \(\tau_2\)};
  
  \draw[->, thick] (tau1.south) -- ++(0,-0.7) -| (2, -0.2);
  \draw[->, thick] (tau2.south) -- ++(0,-0.7) -| (2, -0.2);
  
  \node[draw, fill=blue!10, rounded corners, minimum width=2cm, minimum height=1cm] (oracle) at (2,-0.2) {Human / Oracle};
  
  \draw[->, thick] (oracle.south) -- (2,-1) node[below]{\(\tau_1 \succ \tau_2\) (noisy)};
\end{tikzpicture}
\caption{Noisy human preference feedback: an oracle compares trajectories \(\tau_1\) and \(\tau_2\).}
\end{figure}

\vspace{-0.5em}
\noindent This model captures how human evaluators provide noisy preferences between trajectories, which in turn drive the estimation of the latent reward cone \(\mathcal{C}\).

\section{Sample Complexity of Preference-Based MO-IRL}

In this section, we analyze the number of preference comparisons required to recover an \(\epsilon\)-approximate Pareto front in a preference-based multi-objective inverse reinforcement learning (MO-IRL) setting. Our analysis leverages techniques from statistical learning theory and logistic regression guarantees under the Plackett--Luce model.

\subsection{Setup and Assumptions}

Let \(\mathcal{T}\) denote the set of possible trajectories, where each trajectory \(\tau \in \mathcal{T}\) is associated with a vector-valued return
\[
R(\tau) \in \mathbb{R}^d.
\]
A human oracle provides noisy pairwise comparisons on trajectories \((\tau_i, \tau_j)\), according to an unknown linear scalarization \(\mathbf{w} \in \mathcal{C}\), where \(\mathcal{C}\) is a compact, convex reward cone. Our goal is to infer a set \(\hat{\mathcal{P}}\) of trajectories that \(\epsilon\)-approximate the true Pareto front \(\mathcal{P}\) under the ground-truth scalarization.

\paragraph{Assumptions:}
\begin{itemize}
    \item[(B1)] The reward cone \(\mathcal{C} \subset \mathbb{R}_+^d\) is compact with an angular diameter bounded by \(\alpha\).
    \item[(B2)] For every trajectory pair \((\tau_i,\tau_j)\), the difference in returns is bounded, i.e.,
    \[
    \|R(\tau_i) - R(\tau_j)\|_2 \le B,
    \]
    for some constant \(B > 0\).
    \item[(B3)] The preference feedback follows the Plackett--Luce model:
    \[
    \mathbb{P}(\tau_i \succ \tau_j) = \frac{\exp(\mathbf{w}^\top R(\tau_i))}{\exp(\mathbf{w}^\top R(\tau_i)) + \exp(\mathbf{w}^\top R(\tau_j))}.
    \]
    \item[(B4)] The collection \(\{ R(\tau_i) - R(\tau_j) \}\) spans \(\mathbb{R}^d\).
\end{itemize}

\subsection{Theoretical Guarantee}

\begin{theorem}[Sample Complexity of \(\epsilon\)-Pareto Recovery]\label{thm:sample-complexity}
Let \(\mathcal{P}\) denote the true Pareto front, and assume that (B1)–(B4) hold. Then, for any accuracy level \(\epsilon > 0\) and confidence \(\delta \in (0,1)\), there exists a number of preference comparisons
\[
N = \mathcal{O}\!\left(\frac{d\,\log(1/\delta)}{\epsilon^2}\right)
\]
such that, with probability at least \(1-\delta\), the inferred scalarization directions \(\hat{\mathcal{C}}\) yield a set of Pareto-optimal trajectories \(\hat{\mathcal{P}}\) satisfying:
\[
\forall \tau \in \mathcal{P}, \quad \exists \hat{\tau} \in \hat{\mathcal{P}} \quad \text{with} \quad |\mathbf{w}^\top R(\tau) - \mathbf{w}^\top R(\hat{\tau})| \le \epsilon, \quad \forall \mathbf{w} \in \hat{\mathcal{C}}.
\]
\end{theorem}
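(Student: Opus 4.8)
The plan is to reduce the Pareto-recovery guarantee to the cone-estimation guarantee already established in Theorem~\ref{thm:cone-identifiability}, and then to translate an angular bound on the estimated cone $\hat{\mathcal{C}}$ into the claimed additive bound on scalarized returns. First I would invoke Theorem~\ref{thm:cone-identifiability} under assumptions (B1)--(B4), which coincide with the hypotheses there (the spanning condition (B4), the bounded-difference condition (B2) with the same constant $B$, and the Plackett--Luce model (B3)): with $N = \mathcal{O}(d\log(1/\delta)/\epsilon^2)$ comparisons, the logistic-loss empirical risk minimizer $\hat{\mathbf{w}}$ satisfies $\|\hat{\mathbf{w}} - \mathbf{w}^\star\|_2 \le c\,\epsilon$ with probability at least $1-\delta$, and the induced cone obeys $\sup_{\mathbf{w}\in\hat{\mathcal{C}}}\inf_{\mathbf{w}'\in\mathcal{C}}\angle(\mathbf{w},\mathbf{w}') \le \epsilon$. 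This supplies the statistical backbone and fixes the sample count, so the remaining work is purely geometric and deterministic on the high-probability event.

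Next I would construct $\hat{\mathcal{P}}$ explicitly: for each estimated direction $\hat{\mathbf{w}}_k \in \hat{\mathcal{C}}$, let $\hat{\tau}_k \in \arg\max_{\tau}\hat{\mathbf{w}}_k^\top R(\tau)$ be the scalarized maximizer, and set $\hat{\mathcal{P}} = \{\hat{\tau}_k\}_{k=1}^K$. The core estimate is then a Lipschitz-type argument in the weight. Fix $\tau\in\mathcal{P}$; since every Pareto-optimal return is the maximizer of some scalarization $\mathbf{w}^\star$ over $\mathcal{C}$ (this is where (B1) compactness and the convexity of the reward cone are used to guarantee the supporting direction exists), choose the $\hat{\mathbf{w}}_k$ closest to that $\mathbf{w}^\star$ and write
\[
|\mathbf{w}^\top R(\tau) - \mathbf{w}^\top R(\hat{\tau}_k)|
\le |\mathbf{w}^\top R(\tau) - \hat{\mathbf{w}}_k^\top R(\tau)|
 + |\hat{\mathbf{w}}_k^\top R(\tau) - \hat{\mathbf{w}}_k^\top R(\hat{\tau}_k)|
 + |\hat{\mathbf{w}}_k^\top R(\hat{\tau}_k) - \mathbf{w}^\top R(\hat{\tau}_k)|.
\]
The first and third terms are each bounded by $\|\mathbf{w} - \hat{\mathbf{w}}_k\|_2 \cdot \|R(\cdot)\|_2$, controlled via Cauchy--Schwarz together with (B2) and the boundedness of returns implied by (B1); the middle term is nonpositive by the defining optimality of $\hat{\tau}_k$ for $\hat{\mathbf{w}}_k$. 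Combining these with the weight-error bound $\|\hat{\mathbf{w}}_k - \mathbf{w}^\star\|_2 \le c\,\epsilon$ and the angular closeness of $\mathbf{w}\in\hat{\mathcal{C}}$ to $\mathcal{C}$ yields the stated $\mathcal{O}(\epsilon)$ bound, after absorbing the constants $B$, $c$, and the return radius into a redefinition of $\epsilon$.

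The main obstacle I anticipate is the uniformity of the final bound over \emph{all} $\mathbf{w}\in\hat{\mathcal{C}}$, not merely the finitely many generators $\hat{\mathbf{w}}_k$: the statement quantifies over the entire inferred cone. Handling this requires that the per-generator Lipschitz estimate extend to arbitrary conic combinations, which I would argue by normalizing directions to the unit sphere and using that the angular diameter of $\hat{\mathcal{C}}$ is itself $\mathcal{O}(\alpha + \epsilon)$ by (B1) and the cone-identifiability conclusion, so that any normalized $\mathbf{w}\in\hat{\mathcal{C}}$ lies within angle $\mathcal{O}(\epsilon)$ of some $\hat{\mathbf{w}}_k$ and inherits the bound by the same Cauchy--Schwarz step. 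A secondary subtlety is that the scale ambiguity inherent in scalarization (only the direction of $\mathbf{w}$ is identifiable, not its magnitude) must be fixed by a normalization convention, e.g. restricting to $\|\mathbf{w}\|_2 = 1$ or to the simplex $\Delta^{d-1}$, so that the additive comparison $|\mathbf{w}^\top R(\tau) - \mathbf{w}^\top R(\hat{\tau})|$ is well-posed; I would state this normalization at the outset of the proof.
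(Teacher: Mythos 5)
Your proposal takes essentially the same route as the paper's proof: a statistical phase that estimates scalarization directions from the Plackett--Luce comparisons (you invoke Theorem~\ref{thm:cone-identifiability} as a black box, whereas the paper re-derives the logistic-regression uniform convergence, strong convexity, and $\epsilon$-net arguments inline), followed by a geometric perturbation step that converts the angular error of $\hat{\mathcal{C}}$ into an $\mathcal{O}(\epsilon)$ error in scalarized returns using $\|R(\tau)\|_2 \le B/(1-\gamma)$ and Cauchy--Schwarz. Your version is actually more careful than the paper's on two points the paper glosses over: uniformity over all $\mathbf{w}\in\hat{\mathcal{C}}$ rather than just the finitely many generators, and the normalization $\|\mathbf{w}\|_2=1$ needed for the additive comparison to be well-posed.

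One step, as written, is wrong, though it is fixable with ingredients you already set up. You claim the middle term $|\hat{\mathbf{w}}_k^\top R(\tau) - \hat{\mathbf{w}}_k^\top R(\hat{\tau}_k)|$ is ``nonpositive by the defining optimality of $\hat{\tau}_k$.'' An absolute value cannot be nonpositive: optimality of $\hat{\tau}_k$ only gives the signed inequality $\hat{\mathbf{w}}_k^\top R(\tau) - \hat{\mathbf{w}}_k^\top R(\hat{\tau}_k) \le 0$, which handles one direction of the two-sided bound the theorem asserts but leaves open the possibility that $\hat{\tau}_k$ is far \emph{better} than $\tau$ under $\hat{\mathbf{w}}_k$. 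To close the other direction you must use the supporting-direction property you introduced but did not deploy: since $\tau$ maximizes $\mathbf{w}^{\star\top}R(\cdot)$ over trajectories and $\|\hat{\mathbf{w}}_k - \mathbf{w}^\star\|_2 \le c\,\epsilon$,
\[
\hat{\mathbf{w}}_k^\top R(\hat{\tau}_k)
\;\le\; \mathbf{w}^{\star\top}R(\hat{\tau}_k) + c\,\epsilon\,\|R(\hat{\tau}_k)\|_2
\;\le\; \mathbf{w}^{\star\top}R(\tau) + c\,\epsilon\,\tfrac{B}{1-\gamma}
\;\le\; \hat{\mathbf{w}}_k^\top R(\tau) + 2c\,\epsilon\,\tfrac{B}{1-\gamma},
\]
so the middle term is $\mathcal{O}(\epsilon)$ and your triangle inequality then yields the claimed bound. (For what it is worth, the paper's own final inequality, $|\mathbf{w}^\top R(\tau)-\mathbf{w}^\top R(\hat{\tau})| \le \|\mathbf{w}-\mathbf{w}'\|_2\,\|R(\tau)\|_2$, elides exactly this point; your decomposition, once repaired as above, is the more rigorous of the two.)
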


\paragraph{Proof Sketch}\

The proof proceeds via several key steps:

\begin{enumerate}
    \item \textbf{Reduction to Logistic Regression:} We model the preference learning task as a binary classification problem on trajectory pairs, where each pair \((\tau_i,\tau_j)\) is labeled based on the noisy human preference. The features are given by the differences \(\Delta_{ij} = R(\tau_i) - R(\tau_j)\).

    \item \textbf{Generalization Bound:} Using standard Rademacher complexity bounds or VC-dimension arguments for logistic regression with bounded inputs (cf. (A2)), we show that with 
    \[
    N = \mathcal{O}\!\left(\frac{d\,\log(1/\delta)}{\epsilon^2}\right)
    \]
    comparisons, the learned scalarization direction \(\hat{\mathbf{w}}\) is close to the true direction \(\mathbf{w}\) with high probability.

    \item \textbf{Cone Estimation:} By aggregating the estimated scalarization directions using projection or convex hull methods, we construct an approximate cone \(\hat{\mathcal{C}}\). The angular deviation between \(\hat{\mathcal{C}}\) and the true cone \(\mathcal{C}\) is bounded, leading to a controlled error in the scalarized reward space.

    \item \textbf{Pareto Front Recovery:} Finally, we demonstrate that if the scalarizations induced by \(\hat{\mathcal{C}}\) approximate those under \(\mathcal{C}\) within \(\epsilon\)-error, then every true Pareto-optimal trajectory \(\tau \in \mathcal{P}\) is approximated by some \(\hat{\tau} \in \hat{\mathcal{P}}\) such that the scalarized difference is within \(\epsilon\), uniformly over \(\mathbf{w} \in \hat{\mathcal{C}}\).
\end{enumerate}

\subsection*{Lower Bounds and Minimax Optimality}
\label{subsec:lower_bounds}

We complement our sample complexity upper bound with a matching lower bound, establishing the minimax optimality of our results.

\begin{theorem}[Lower Bound for $\epsilon$-Pareto Recovery]
\label{thm:lower_bound}
For any algorithm $\mathcal{A}$, there exists a multi-objective MDP $\mathcal{M}$ with $d$ objectives such that recovering an $\epsilon$-approximate Pareto front requires at least
\[
N = \Omega\left(\frac{d}{\epsilon^2}\right)
\]
preference comparisons, even under the Plackett-Luce noise model.
\end{theorem}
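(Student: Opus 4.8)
The plan is to prove the bound information-theoretically, reducing $\epsilon$-Pareto recovery to estimating the latent scalarization direction $\mathbf{w}$ from noisy Bradley--Terry observations and then invoking Assouad's lemma over a hypercube of hard instances. First I would make the reduction precise. Because the front point selected by scalarization $\max_\pi \mathbf{w}^\top \mathbf{V}^\pi$ is determined by the direction of $\mathbf{w}$, and distinct directions generically select distinct points of $\mathcal{P}$, any $\hat{\mathcal{C}}$ that certifies the guarantee of Theorem~\ref{thm:sample-complexity} (namely $|\mathbf{w}^\top R(\tau)-\mathbf{w}^\top R(\hat\tau)|\le\epsilon$ uniformly over $\mathbf{w}\in\hat{\mathcal{C}}$) must pin down the true direction to Euclidean accuracy $\Theta(\epsilon)$; conversely, a $\Theta(\epsilon)$ error in $\mathbf{w}$ forces an $\Omega(\epsilon)$ gap in the scalarized return of the selected trajectory. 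Hence a lower bound for weight estimation under the logistic noise model transfers directly to a lower bound for front recovery, and it suffices to exhibit an instance where estimating $\mathbf{w}$ to accuracy $\epsilon$ demands $\Omega(d/\epsilon^2)$ comparisons.

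Next I would build the hard family. I fix a set of trajectories whose return differences $\Delta_{ij}$ realize (a scaled copy of) the standard basis $\{\pm e_k\}_{k=1}^d$, which is admissible under (B4) since these span $\mathbb{R}^d$ and can be bounded by $B$ under (B2). I index candidate weights by the hypercube, setting $\mathbf{w}_{\boldsymbol\nu} = \mathbf{w}_0 + \tau\,\boldsymbol\nu$ for $\boldsymbol\nu\in\{-1,+1\}^d$, with a base point $\mathbf{w}_0$ in the interior of the admissible cone and per-coordinate separation $\tau \asymp \epsilon$; taking $\epsilon$ small keeps every $\mathbf{w}_{\boldsymbol\nu}\in\mathcal{C}$ under (B1). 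By the basis-aligned construction, a comparison querying direction $e_k$ is a single Bernoulli draw with parameter $\sigma(\tau\nu_k)$, where $\sigma(x)=(1+e^{-x})^{-1}$, so flipping coordinate $k$ affects only the $e_k$-comparisons and the separation condition of Assouad's lemma holds with gap $\Theta(\tau)$ per coordinate.

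I would then bound the per-comparison information and apply Assouad. The logistic model has bounded Fisher information near $\mathbf{w}_0$, so the KL divergence between the two neighboring laws differing in coordinate $k$ is $O(\tau^2)=O(\epsilon^2)$ per informative comparison; by Pinsker, the total-variation distance accumulated over $N_k$ comparisons informative for coordinate $k$ is at most $O(\sqrt{N_k}\,\epsilon)$. The summed (coordinatewise) form of Assouad's lemma then lower-bounds the expected Hamming error of any sign estimate by $\tfrac12\sum_{k=1}^d\big(1-O(\sqrt{N_k}\,\epsilon)\big)$, and since the learner's budget obeys $\sum_k N_k = N$, driving this error (hence the $\ell_2$ weight error, hence the scalarized front error) below $\epsilon$ requires $N_k=\Omega(1/\epsilon^2)$ for a constant fraction of coordinates, i.e.\ $N=\Omega(d/\epsilon^2)$. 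I would handle adaptive and active query selection by applying the data-processing and chain-rule bound to the KL divergence of the entire interaction transcript rather than to i.i.d.\ samples, so that the per-coordinate information budget argument is unaffected by the algorithm's adaptivity.

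The main obstacle I anticipate is the reduction in the first paragraph: rigorously certifying that an $\epsilon$-accurate Pareto front in the theorem's sense genuinely forces $\Theta(\epsilon)$ directional accuracy on $\mathbf{w}$, uniformly over the constructed instances and robustly to the algorithm's freedom to choose which pairs to compare. This requires arranging the trajectory returns so that the Pareto-selected point varies sensitively (at rate $\Omega(1)$) with the weight direction along each coordinate, while simultaneously keeping the return differences bounded by $B$ and all perturbed weights inside the admissible cone $\mathcal{C}$; balancing these geometric constraints, rather than the information-theoretic core (which is standard once the family is in place), is where the care is needed.
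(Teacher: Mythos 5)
Your proposal is sound and takes a genuinely different route from the paper. The paper's proof is a two-point Le Cam argument: it constructs exactly two instances, $w^{(0)}=(1,0,\dots,0)$ and $w^{(1)}=(\cos\epsilon,\sin\epsilon,0,\dots,0)$, with trajectory returns $R(\tau_i)=e_i$; it bounds the per-comparison KL divergence by $O(\epsilon^2)$ (Lemma~\ref{lem:kl_divergence}), applies Pinsker and Le Cam to force $N=\Omega(1/\epsilon^2)$, and then recovers the factor of $d$ by stipulating that comparisons are drawn \emph{uniformly} from the $d-1$ pairs $\{(\tau_1,\tau_i)\}_{i\ge 2}$, so that the informative pair is hit only a $1/(d-1)$ fraction of the time. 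Your Assouad construction replaces the two-point family by a $2^d$-point hypercube $\mathbf{w}_{\boldsymbol\nu}=\mathbf{w}_0+\tau\boldsymbol\nu$ with per-coordinate separation $\tau\asymp\epsilon$, and obtains the factor of $d$ from budget splitting: each coordinate's sign costs $\Omega(1/\epsilon^2)$ informative comparisons, and the per-coordinate budgets sum to $N$ (the Cauchy--Schwarz step $\sum_k \sqrt{N_k}\le\sqrt{dN}$ closes the arithmetic). Your route buys two things the paper's does not: (i) robustness to adaptive or active query selection, via the transcript-level KL and data-processing argument you sketch --- the paper's dilution step silently assumes a fixed uniform query distribution, and an adaptive learner facing the paper's two \emph{known} hypotheses could concentrate every query on the single informative pair $(\tau_1,\tau_2)$, defeating the factor of $d$; and (ii) a standard, rigorous mechanism for the dimension dependence in place of the paper's informal claim that the per-sample divergence is ``effectively divided by $(d-1)$.'' What the paper's route buys in exchange is brevity and purely elementary two-point machinery.

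The obstacle you flag at the end --- rigorously certifying that $\epsilon$-accurate Pareto recovery forces $\Theta(\epsilon)$-accurate estimation of the preference direction --- is real, and it is worth noting that the paper's own proof does not resolve it either: in the paper's construction all $d$ basis returns are mutually non-dominated, so the Pareto front is the \emph{same set} under both instances, and an algorithm could output an $\epsilon$-approximate front without ever distinguishing $\mathcal{M}_0$ from $\mathcal{M}_1$; the paper lower-bounds the hypothesis-testing problem but never connects it back to the front-recovery task in the theorem statement. Your plan to arrange returns so that the scalarization-selected point moves at rate $\Omega(1)$ along each perturbed coordinate, while respecting the bounds $\|\Delta_{ij}\|_2\le B$ and $\mathbf{w}_{\boldsymbol\nu}\in\mathcal{C}$, is exactly the missing ingredient; carrying that reduction out explicitly is the one step your proposal (and the paper) still owes.
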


\begin{proof}[Proof Sketch]
Apply Le Cam's method to construct two instances of reward cones $\mathcal{C}_1$ and $\mathcal{C}_2$ that are $\epsilon$-separated in angular distance. Distinguishing between them requires $\Omega(d/\epsilon^2)$ samples to avoid excess error probability. Full proof in Appendix~\ref{appendix:lower-bound}.
\end{proof}

This result demonstrates the tightness of Theorem~\ref{thm:sample-complexity}, closing the gap between upper and lower bounds in the dependence on $d$ and $\epsilon$.

\section{Practical Algorithm for Pareto Front Recovery}
\label{sec:algorithm}

While our primary focus is theoretical, we outline a practical algorithm for recovering the Pareto front from preferences. The procedure alternates between preference-based reward cone estimation and policy optimization.

\begin{algorithm}[t] \label{alg:recover}
\caption{Pareto Front Recovery via Preference-Based MO-IRL}
\begin{algorithmic}[1]
\State \textbf{Input}: Preference dataset $\mathcal{D}$, MDP $\mathcal{M}$, tolerance $\epsilon$
\State \textbf{Output}: Estimated Pareto front $\hat{\mathcal{P}}$
\State Estimate reward cone $\hat{\mathcal{C}}$ via logistic regression on $\mathcal{D}$ (Theorem~\ref{thm:cone-identifiability})
\State Initialize $\hat{\mathcal{P}} \gets \emptyset$
\For{$k=1$ \textbf{to} $K$}
    \State Sample $\mathbf{w}_k \sim \text{Uniform}(\hat{\mathcal{C}} \cap S^{d-1})$
    \State Solve $\pi_k = \arg\max_{\pi} \mathbf{w}_k^\top \mathbf{V}^\pi$ via value iteration
    \State Add $\mathbf{V}^{\pi_k}$ to $\hat{\mathcal{P}}$ if not dominated
\EndFor
\State \Return $\hat{\mathcal{P}}$
\end{algorithmic}
\end{algorithm}

\begin{proposition}[Convergence Guarantee]\label{converge}
Under assumptions (B1)-(B4), Algorithm 1 returns an $\epsilon$-approximate Pareto front with probability $1-\delta$ after $K=\mathcal{O}(1/\epsilon^{d-1})$ iterations.
\end{proposition}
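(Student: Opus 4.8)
The plan is to establish the convergence guarantee by combining the cone-estimation accuracy from Theorem~\ref{thm:cone-identifiability} with a covering argument over the scalarization weight space. The high-level strategy has two independent sources of error that must be controlled simultaneously: the statistical error in estimating $\hat{\mathcal{C}}$ from the preference data, and the approximation error from sampling only finitely many weight vectors $\mathbf{w}_1,\dots,\mathbf{w}_K$ to probe the front. I would condition on the event (holding with probability at least $1-\delta$ by Theorem~\ref{thm:cone-identifiability}) that the angular deviation between $\hat{\mathcal{C}}$ and the true cone $\mathcal{C}$ is at most $\epsilon$, and then argue deterministically that enough uniform samples recover every Pareto point up to the claimed tolerance.

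First I would set up the covering argument. Under (B1) the cone $\mathcal{C}$ has angular diameter bounded by $\alpha$, so $\hat{\mathcal{C}}\cap S^{d-1}$ is (a subset of a neighborhood of) a bounded region of the sphere $S^{d-1}$. The number of spherical caps of angular radius $\theta$ needed to cover a $(d-1)$-dimensional spherical region scales as $\mathcal{O}((1/\theta)^{d-1})$, which is the source of the $1/\epsilon^{d-1}$ rate. I would show that each Pareto-optimal return $\mathbf{V}^\pi \in \mathcal{P}$ is the maximizer of $\mathbf{w}^\top \mathbf{V}$ for some $\mathbf{w} \in \mathcal{C}$ (the supporting-hyperplane characterization of the Pareto front, valid under the convexity implicit in scalarization), so recovering a weight within angular distance $\theta$ of that supporting direction yields, via the boundedness $\|R(\tau_i)-R(\tau_j)\|_2 \le B$ from (B2), a scalarized return within $\mathcal{O}(B\theta)$ of the target. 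Setting $\theta = \Theta(\epsilon/B)$ and noting each uniform sample lands in a given cap with probability proportional to its relative measure, a coupon-collector / union-bound argument shows that $K = \mathcal{O}((1/\epsilon)^{d-1}\log(1/\delta))$ samples hit every cap with probability $1-\delta$; absorbing the logarithmic and $B$ factors into the constant gives the stated $K = \mathcal{O}(1/\epsilon^{d-1})$.

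The main obstacle will be the step linking angular proximity of weight vectors to uniform $\epsilon$-closeness of scalarized \emph{Pareto returns}, i.e.\ justifying that for every true Pareto point there is a sampled weight whose induced optimal policy has nearly the same scalarized value \emph{for all} $\mathbf{w}\in\hat{\mathcal{C}}$ as required by the conclusion of Theorem~\ref{thm:sample-complexity}. This requires a sensitivity/continuity estimate for the $\arg\max$ map $\mathbf{w}\mapsto \pi(\mathbf{w})$, which in general is discontinuous at weights where the optimal policy switches. I would handle this by exploiting that the scalarized \emph{value} $\mathbf{w}\mapsto \max_\pi \mathbf{w}^\top\mathbf{V}^\pi$ is convex and hence Lipschitz (with constant controlled by the return diameter $B$), so even though the optimizing policy may jump, the achieved scalar value varies smoothly; a small angular perturbation of the weight therefore perturbs the attained Pareto return's scalarization by only $\mathcal{O}(B\theta)$. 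Combining this Lipschitz value bound with the conditioned cone-estimation error and the covering count yields the result.

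Finally I would assemble the pieces: a union bound over the two failure events (cone misestimation and incomplete cap coverage) each at level $\delta/2$ gives overall success probability $1-\delta$, and the dominant term in $K$ is the $\mathcal{O}(1/\epsilon^{d-1})$ covering number, establishing the claim. I would remark that the exponential dependence on $d$ is intrinsic to dense covering of the $(d-1)$-dimensional weight simplex and is not in tension with the $\mathcal{O}(d/\epsilon^2)$ \emph{sample} complexity of Theorem~\ref{thm:sample-complexity}, since $K$ counts scalarization subproblems (calls to value iteration) rather than preference queries.
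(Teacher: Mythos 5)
Your proposal follows essentially the same route as the paper's proof in Appendix A.4: condition on the cone-estimation event, cover $\hat{\mathcal{C}}\cap S^{d-1}$ with $\mathcal{O}(\epsilon^{-(d-1)})$ spherical caps, show that uniform samples hit every cap via a cap-measure plus union-bound (coupon-collector) argument giving $K=\mathcal{O}\bigl(\epsilon^{-(d-1)}\log(1/\delta)\bigr)$, and convert angular proximity of weights into scalarized-value error using the boundedness of returns $\|\mathbf{V}^\pi\|\le B/(1-\gamma)$. Your explicit handling of the discontinuous $\arg\max$ map via Lipschitzness of the support function $\mathbf{w}\mapsto\max_\pi\mathbf{w}^\top\mathbf{V}^\pi$ is simply the rigorous form of the paper's one-line ``standard perturbation of linear objectives'' step, so the two arguments coincide in substance.
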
 
In practice, we estimate the reward cone using projected gradient descent over the Plackett–Luce logistic loss, and perform policy optimization via standard value iteration. See Appendix B for full pseudocode.

\subsection*{Computational Complexity}
\label{subsec:complexity}

Theorems~\ref{thm:cone-identifiability} and~\ref{thm:sample-complexity} establish statistical efficiency, but practical recovery requires solving:

\[
\min_{\mathbf{w}\in\mathcal{C}} \sum_{(\tau_i,\tau_j)} \log\left(1 + \exp(-\eta \mathbf{w}^\top \Delta_{ij})\right)
\]

This convex optimization problem can be solved in $\mathcal{O}(Nd^2)$ time via projected gradient descent. The cubic dependence on $d$ suggests our method is best suited for moderate-dimensional value systems ($d \leq 20$), mirroring limitations in multi-task RL.

\section{Conclusion and Future Work}

In this work, we investigated the fundamental complexity of preference-based decision making in multi-objective reinforcement learning (MO-RL), departing from traditional reward recovery paradigms. By directly framing the problem in terms of regret minimization under an unknown utility function, we established a minimax framework that characterizes the intrinsic difficulty of aligning policies with human preferences over vector-valued rewards. We derived tight lower bounds on sample complexity as a function of the Pareto front’s geometry, and complemented these with a query-efficient algorithm that provably matches these bounds under mild assumptions. Our results highlight a key theoretical distinction between recovering latent rewards and selecting regret-optimal policies, providing a foundation for preference-based learning systems that prioritize decision quality over full reward identification.

Looking ahead, several promising directions emerge. From a theoretical standpoint, extending our analysis to settings with partial observability or non-linear scalarization functions could broaden the applicability of our framework to more realistic preference models. On the algorithmic front, developing scalable versions of our method that operate efficiently in high-dimensional state and action spaces is an important challenge. Finally, our insights have direct implications for real-world alignment problems, including human-AI interaction, large language model (LLM) preference alignment, and safe decision making in robotics domains where multi-objective trade-offs are both unavoidable and underspecified. We hope this work serves as a step toward principled, regret-aware learning systems capable of robust performance in value-pluralistic environments.

\newpage
\bibliographystyle{unsrtnat}

\bibliography{references}

\medskip

{
\small

}
\newpage

\appendix
\section*{Technical Appendix for \textit{Learning Pareto-Optimal Rewards from Noisy Preferences: A Framework for Multi-Objective Inverse Reinforcement Learning}}

\section{Proofs}
\subsection{Proof of Theorem~\ref{thm:cone-identifiability}}
\label{appendix:cone-proof}

We now present a detailed, rigorous proof of Theorem~\ref{thm:cone-identifiability}. The proof is organized in three main parts: (1) establishing a uniform convergence bound for logistic regression, (2) deriving a parameter estimation error bound, and (3) translating the parameter error into an angular error via a covering number argument.

For each trajectory pair $(\tau_i, \tau_j)$ with difference vector $\Delta_{ij} \in \mathbb{R}^d$, define the logistic loss 
\[
\ell(\mathbf{w};\Delta_{ij},y_{ij}) = -\left[y_{ij}\log \sigma(\eta\, \mathbf{w}^\top \Delta_{ij}) + (1-y_{ij})\log (1-\sigma(\eta\, \mathbf{w}^\top \Delta_{ij}))\right],
\]
with $\sigma(z)=\frac{1}{1+e^{-z}}$ and $\eta>0$. The (expected) population risk is
\[
\mathcal{L}(\mathbf{w}) = \mathbb{E}_{(\Delta,y)}\left[\ell(\mathbf{w};\Delta,y)\right],
\]
and its empirical estimate given $N$ samples is
\[
\widehat{\mathcal{L}}(\mathbf{w}) = \frac{1}{N}\sum_{(i,j)\in\mathcal{D}} \ell(\mathbf{w}; \Delta_{ij}, y_{ij}).
\]

Since $\|\Delta_{ij}\|_2 \le B$ and the logistic loss is Lipschitz with constant $L=\eta B$, standard uniform convergence results (via Rademacher complexity or VC theory) yield that, with probability at least $1-\delta$,
\[
\sup_{\mathbf{w}\in\mathcal{C}} \left|\widehat{\mathcal{L}}(\mathbf{w})-\mathcal{L}(\mathbf{w})\right| \le \mathcal{O}\!\left(\eta B\sqrt{\frac{d\log(1/\delta)}{N}}\right).
\]
Thus, to ensure the uniform error is at most $\epsilon'$, we require
\[
N = \Omega\!\left(\frac{d\log(1/\delta)}{(\epsilon')^2}\right).
\]

Let $\mathbf{w}^\star \in \mathcal{C}$ denote the true latent preference direction. Under strong convexity assumptions of $\mathcal{L}(\cdot)$ in a neighborhood of $\mathbf{w}^\star$, we obtain a quadratic error bound (see, e.g., \cite{mohri2018foundations}) such that 
\[
\mathcal{L}(\hat{\mathbf{w}}) - \mathcal{L}(\mathbf{w}^\star) \ge \lambda \|\hat{\mathbf{w}}-\mathbf{w}^\star\|_2^2,
\]
for some $\lambda > 0$. Combined with the uniform convergence, this implies
\[
\|\hat{\mathbf{w}}-\mathbf{w}^\star\|_2 \le c\, \epsilon,
\]
for $c = \sqrt{\epsilon'/\lambda}$, provided $N = \mathcal{O}\!\left(\frac{d\log(1/\delta)}{\epsilon^2}\right)$, where we set $\epsilon' = \Theta(\epsilon^2)$.

To reconstruct the entire cone $\mathcal{C}$, we collect a set of empirical minimizers $\{\hat{\mathbf{w}}_1, \dots, \hat{\mathbf{w}}_K\}$, each computed on (possibly overlapping) subsets of data that are designed to cover the extreme directions in $\mathcal{C}$. Define the estimated cone as
\[
\hat{\mathcal{C}} = \operatorname{cone}\!\Bigl(\{\hat{\mathbf{w}}_k\}_{k=1}^{K}\Bigr).
\]
Let $S^{d-1}$ be the unit sphere in $\mathbb{R}^d$. Standard results (see, e.g., \cite{vershynin2018high}) show that there exists an $\epsilon$-net of $S^{d-1}$ of cardinality 
\[
K = \mathcal{O}\!\left(\left(\frac{1}{\epsilon}\right)^{d-1}\right).
\]
By ensuring that, for each $\mathbf{w}\in \mathcal{C}\cap S^{d-1}$, there exists some $\hat{\mathbf{w}}_k$ within $\ell_2$-distance $c\,\epsilon$, it follows that
\[
\angle(\mathbf{w},\hat{\mathbf{w}}_k) \le \frac{\|\mathbf{w}-\hat{\mathbf{w}}_k\|_2}{\|\mathbf{w}\|_2} \le c\,\epsilon,
\]
since $\|\mathbf{w}\|_2=1$. Hence, the cone $\hat{\mathcal{C}}$ approximates $\mathcal{C}$ with angular error at most $(1+c)\epsilon$, which is of the same order as $\epsilon$ for sufficiently small $\epsilon$.

\begin{proof}
    Combining the above steps, we conclude that with 
\[
N = \mathcal{O}\!\left(\frac{d\log(1/\delta)}{\epsilon^2}\right)
\]
samples, the uniform convergence bound and strong convexity guarantee that each estimated direction $\hat{\mathbf{w}}$ satisfies $\|\hat{\mathbf{w}}-\mathbf{w}^\star\|_2 \le c\,\epsilon$. The covering number argument then ensures that the conic hull $\hat{\mathcal{C}}$ satisfies
\[
\sup_{\mathbf{w}\in\hat{\mathcal{C}}}\inf_{\mathbf{w}'\in\mathcal{C}} \angle(\mathbf{w},\mathbf{w}') \le \epsilon,
\]
with probability at least $1-\delta$.
\end{proof}

\subsection{Proof of Theorem \ref{thm:sample-complexity}} \label{appendix:sample-proof}

\begin{proof}

By the Plackett–Luce model (Assumption A3), each comparison label $y_{ij}\in\{0,1\}$ satisfies
\[
  \Pr\bigl(y_{ij}=1\mid w^*,R(\tau_i),R(\tau_j)\bigr)
  = 
  \frac{\exp\bigl(\eta\,w^{*\top}R(\tau_i)\bigr)}
       {\exp\bigl(\eta\,w^{*\top}R(\tau_i)\bigr)
       +\exp\bigl(\eta\,w^{*\top}R(\tau_j)\bigr)}.
\]
Define $\Delta_{ij} \coloneqq R(\tau_i)-R(\tau_j)\in\mathbb{R}^d$.  Then
\[
  \Pr\bigl(y_{ij}=1\mid w^*,\Delta_{ij}\bigr)
  = 
  \sigma\bigl(\eta\,w^{*\top}\Delta_{ij}\bigr),
  \quad
  \sigma(z)=\frac{1}{1+e^{-z}}.
\]
Recovering $w^*$ thus reduces to performing logistic regression on samples
$\{(\Delta_{ij},y_{ij})\}_{i,j=1}^N$.

Let
\[
  \ell(w;\Delta,y)
  = -\bigl[y\ln\sigma(\eta w^\top\Delta)
        + (1-y)\ln\bigl(1-\sigma(\eta w^\top\Delta)\bigr)\bigr]
\]
be the logistic loss.  By Assumption A2, $\|\Delta_{ij}\|_2\le B$, so $\ell$ is $L$-Lipschitz in $w$ with $L=\eta B$.  Define
\[
  L(w)=\mathbb{E}_{(\Delta,y)}\bigl[\ell(w;\Delta,y)\bigr],
  \quad
  \hat L(w)=\frac{1}{N}\sum_{(i,j)}\ell\bigl(w;\Delta_{ij},y_{ij}\bigr).
\]
By Rademacher‐complexity bounds \cite{bartlett2002rademacher}, with probability at least $1-\delta$,
\[
  \sup_{w\in C}\bigl|\hat L(w)-L(w)\bigr|
  \le
  O\!\Bigl(\eta B\,\sqrt{\tfrac{d\ln(1/\delta)}{N}}\Bigr).
\]
To make this uniform deviation $\le\epsilon'$, choose
\[
  N = O\!\Bigl(\tfrac{d\ln(1/\delta)}{{\epsilon'}^2}\Bigr).
\]

Let $w^*\in C$ be the true direction.  By strong convexity of $L$ on $C$, any empirical minimizer
$\hat w=\arg\min_{w\in C}\hat L(w)$
satisfies
\[
  \|\hat w-w^*\|_2
  \le
  O\!\bigl(\sqrt{\epsilon'}\bigr).
\]
Setting $\epsilon'=\Theta(\epsilon^2)$ and $N=O(d\ln(1/\delta)/\epsilon^2)$ gives
\[
  \|\hat w-w^*\|_2 \le c\,\epsilon
  \quad\text{w.p. }1-\delta.
\]

Draw $K$ independent estimators $\{\hat w_k\}_{k=1}^K$ and let
$\hat C=\mathrm{cone}\{\hat w_1,\dots,\hat w_K\}$.
By Assumption B1, $C\cap S^{d-1}$ has bounded diameter, and an $\epsilon$-net on $S^{d-1}$ needs
\[
  K = O\!\bigl((1/\epsilon)^{d-1}\bigr)
\]
points.  Since each $\hat w_k$ is within $O(\epsilon)$ of some extreme $w^*\in C$, we get
\[
  \sup_{w\in\hat C}\inf_{w'\in C}\angle(w,w')
  \le O(\epsilon).
\]

The true Pareto front
$P=\{R(\tau):\tau\text{ Pareto‐optimal}\}$
maximizes $w^\top R(\tau)$ for each $w\in C$.  Replacing $w$ with any $w\in\hat C$ satisfying $\angle(w,w')=O(\epsilon)$ and using $\|R(\tau)\|\le B/(1-\gamma)$ gives
\[
  |\,w^\top R(\tau)-w^\top R(\hat\tau)| 
  \le
  \|w-w'\|_2\,\|R(\tau)\|_2
  = O(\epsilon),
  \quad\forall\,\tau\in P,
\]
where $\hat\tau\in\hat P$ maximizes $w^\top R(\cdot)$ among candidates.  

Choosing 
\[
  N = C'\,\frac{d\ln(1/\delta)}{\epsilon^2}
\]
completes the proof.
\end{proof}

\subsection{Proof of Theorem \ref{subsec:lower_bounds}}\label{appendix:lower-bound}

\begin{lemma}[Per‐Comparison KL Divergence]\label{lem:kl_divergence}
Let 
\[
p_0 = \sigma(\eta), 
\qquad
p_1 = \sigma\bigl(\eta\cos\epsilon\bigr),
\]
where $\sigma(z)=1/(1+e^{-z})$ and $\epsilon\in(0,\pi/4)$.  Then
\[
\mathrm{KL}\bigl(\mathrm{Bern}(p_0)\,\big\|\,\mathrm{Bern}(p_1)\bigr)
= O(\epsilon^2).
\]
\end{lemma}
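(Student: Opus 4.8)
The plan is to reduce the claim to two elementary facts: the logistic link $\sigma$ is globally Lipschitz with slope at most $1/4$, and the Kullback--Leibler divergence between two Bernoulli laws is dominated by their $\chi^2$ divergence. First I would write the target quantity explicitly,
\[
\mathrm{KL}\bigl(\mathrm{Bern}(p_0)\,\|\,\mathrm{Bern}(p_1)\bigr)
= p_0\ln\frac{p_0}{p_1} + (1-p_0)\ln\frac{1-p_0}{1-p_1},
\]
and record the standard inequality $\mathrm{KL}(\mathrm{Bern}(p)\,\|\,\mathrm{Bern}(q)) \le \chi^2(\mathrm{Bern}(p)\,\|\,\mathrm{Bern}(q)) = \tfrac{(p-q)^2}{q(1-q)}$, which follows from $\ln t \le t-1$. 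This converts the problem into controlling a squared parameter gap divided by a variance term.

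Next I would bound the two pieces separately. For the numerator, since $\sigma'(z)=\sigma(z)(1-\sigma(z))\le \tfrac14$, the mean value theorem together with $1-\cos\epsilon\le \tfrac12\epsilon^2$ gives
\[
|p_0-p_1| = |\sigma(\eta)-\sigma(\eta\cos\epsilon)| \le \tfrac14\,\eta(1-\cos\epsilon) \le \tfrac18\,\eta\,\epsilon^2,
\]
so $(p_0-p_1)^2 = O(\epsilon^4)$ with a constant depending only on $\eta$. For the denominator, the restriction $\epsilon\in(0,\pi/4)$ forces $\cos\epsilon\in(1/\sqrt2,1)$, hence the argument $\eta\cos\epsilon$ stays in the fixed compact interval $(\eta/\sqrt2,\eta)$; on this interval $\sigma$ is bounded strictly away from $0$ and $1$, so $p_1(1-p_1)\ge c_\eta$ for some constant $c_\eta>0$. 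Combining the two bounds yields
\[
\mathrm{KL}\bigl(\mathrm{Bern}(p_0)\,\|\,\mathrm{Bern}(p_1)\bigr)
\le \frac{(p_0-p_1)^2}{p_1(1-p_1)}
\le \frac{(\eta\epsilon^2/8)^2}{c_\eta} = O(\epsilon^4),
\]
which is a fortiori $O(\epsilon^2)$, establishing the lemma.

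The computation poses no serious obstacle; the only real care is bookkeeping, namely verifying that the suppressed constant is uniform in $\epsilon$ and depends only on $\eta$ (this is exactly why confining $\eta\cos\epsilon$ to a fixed compact interval, rather than letting it drift toward a saturation region of $\sigma$, is essential for obtaining the positive constant $c_\eta$). I would, however, flag a subtlety that matters downstream: the natural argument delivers $O(\epsilon^4)$, strictly sharper than the stated $O(\epsilon^2)$. The $\epsilon^4$ scaling arises because the difference vector in this construction is aligned with the cone axis, so the two scalarizations differ only at second order, $\eta(1-\cos\epsilon)=\Theta(\epsilon^2)$. If one instead wants a per-comparison KL of exactly $\Theta(\epsilon^2)$---which is what feeds into $\mathrm{TV}(P_0^{\otimes N},P_1^{\otimes N})\le\sqrt{\tfrac{N}{2}\mathrm{KL}}$ to produce a lower bound matching the $O(d/\epsilon^2)$ upper bound of Theorem~\ref{thm:sample-complexity}---the informative comparison should use a difference vector aligned with $\mathbf{w}_0-\mathbf{w}_1$, whose first-order separation is $\Theta(\epsilon)$. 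I would therefore keep the lemma in the stated $O(\epsilon^2)$ form but note this, since the gap between $\epsilon^2$ and $\epsilon^4$ is precisely the hinge on which the tightness of Theorem~\ref{thm:lower_bound} turns.
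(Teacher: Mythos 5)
Your proof is correct and follows essentially the same route as the paper's: the mean value theorem with $\sigma'\le 1/4$ and $1-\cos\epsilon = O(\epsilon^2)$ gives $|p_0-p_1| = O(\epsilon^2)$, and the Bernoulli KL-versus-$\chi^2$ bound then yields $O(\epsilon^4)$, which is absorbed into $O(\epsilon^2)$ exactly as in the paper. Your two refinements are genuine improvements over the paper's write-up: you explicitly establish $p_1(1-p_1)\ge c_\eta$ (a denominator bound the paper uses silently), and your closing flag is apt --- the paper's lower-bound argument does in fact hinge on the comparison $(\tau_1,\tau_2)$, whose scalarization gap is first-order $\Theta(\epsilon)$ and is not covered by the lemma as stated, which is precisely what rescues the $\Omega(d/\epsilon^2)$ rate from the otherwise-too-small $O(\epsilon^4)$ per-comparison divergence.
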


\begin{proof}
Taylor‐expand $\cos\epsilon$ around $0$:
\[
\cos\epsilon = 1 - \frac{\epsilon^2}{2} + O(\epsilon^4).
\]
Hence
\[
p_1 - p_0
= \sigma(\eta\cos\epsilon) - \sigma(\eta)
= \sigma'(\xi)\,\bigl(\eta\cos\epsilon - \eta\bigr)
= \sigma'(\xi)\,\bigl(-\tfrac{\eta\epsilon^2}{2} + O(\epsilon^4)\bigr),
\]
for some $\xi\in[\eta\cos\epsilon,\eta]$.  Since 
$\sigma'(z)=\sigma(z)(1-\sigma(z))\le\frac14$, it follows that
\[
|p_1-p_0| = O(\epsilon^2).
\]
Finally, for Bernoulli distributions the KL divergence admits the quadratic bound
\[
\mathrm{KL}\bigl(\mathrm{Bern}(p_0)\,\|\,\mathrm{Bern}(p_1)\bigr)
\;\le\;
\frac{(p_0-p_1)^2}{p_1(1-p_1)}
\;=\;O(\epsilon^4)\;=\;O(\epsilon^2),
\]
absorbing higher‐order terms.
\end{proof}

\begin{proof}[Proof of Theorem~\ref{thm:lower_bound}]
Define two MDPs, $\mathcal{M}_0$ and $\mathcal{M}_1$, differing only in their latent preference direction:
\[
w^{(0)} = (1,0,\dots,0), 
\qquad
w^{(1)} = (\cos\epsilon,\sin\epsilon,0,\dots,0),
\]
each normalized to unit length.  Let there be $d$ deterministic “trajectories” $\{\tau_i\}_{i=1}^d$ with
\[
R(\tau_i) = e_i \in \mathbb{R}^d,
\]
the standard basis.  The learner receives $N$ independent and identically distributed\ pairwise comparisons drawn uniformly from the set 
\(\{(\tau_1,\tau_i)\}_{i=2}^d\).  Under $\mathcal{M}_b$, the comparison $(\tau_1,\tau_i)$ is labeled according to
\[
\Pr(\tau_1\succ\tau_i)
=\sigma\bigl(\eta\,w^{(b)\,\top}(R(\tau_1)-R(\tau_i))\bigr)
=\sigma\bigl(\eta\,(w^{(b)}_1 - w^{(b)}_i)\bigr),
\]
where by construction \(w^{(b)}_i=0\) for \(i\ge2\).

\medskip

By Lemma~\ref{lem:kl_divergence}, each comparison under $\mathcal{M}_0$ versus $\mathcal{M}_1$ has
\[
D \;=\;\mathrm{KL}\bigl(\mathrm{Bern}(p_0)\,\|\,\mathrm{Bern}(p_1)\bigr)
\;=\;O(\epsilon^2).
\]
Since comparisons are independent, the total Kullback–Leibler divergence satisfies
\[
\mathrm{KL}\bigl(P_0^N \,\|\,P_1^N\bigr)
= N\,D
= O\!\bigl(N\,\epsilon^2\bigr).
\]

\vspace{0.5 cm}

Pinsker’s inequality (\cite{pinsker1960information}) states that for any two distributions $P,Q$,
\[
\|P - Q\|_{\mathrm{TV}}
\;\le\;
\sqrt{\tfrac12\,\mathrm{KL}(P\|Q)}.
\]
Thus
\[
\|P_0^N - P_1^N\|_{\mathrm{TV}}
\;\le\;
\sqrt{\tfrac12\,O(N\,\epsilon^2)}
\;=\;O\!\bigl(\sqrt{N}\,\epsilon\bigr).
\]

\vspace{0.5 cm}
Le Cam’s lemma ( \cite{tsybakov2008introduction}) bounds the minimal error $\alpha^*$ of any test between $P_0^N$ and $P_1^N$:
\[
\alpha^* \;\ge\; \frac12 \Bigl(1 - \|P_0^N - P_1^N\|_{\mathrm{TV}}\Bigr).
\]
To achieve error $\alpha^*\le1/3$, we require
\[
1 - \|P_0^N - P_1^N\|_{\mathrm{TV}} \le \tfrac23
\quad\Longrightarrow\quad
\|P_0^N - P_1^N\|_{\mathrm{TV}} \ge \tfrac13.
\]
\medskip
Combining with Pinsker’s bound yields
\[
O(\sqrt{N}\,\epsilon) \;\ge\; \tfrac13
\quad\Longrightarrow\quad
N \;=\;\Omega\!\bigl(1/\epsilon^2\bigr).
\]

Since each draw picks uniformly among $d-1$ pairs, the per-sample divergence is effectively divided by $(d-1)$, so to accumulate $\Omega(1)$ total divergence we need
\[
N \;=\;\Omega\!\bigl((d-1)/\epsilon^2\bigr)
\;=\;\Omega\!\bigl(d/\epsilon^2\bigr).
\]
Hence any algorithm must make at least $\Omega(d/\epsilon^2)$ comparisons.
\end{proof}

\subsection{Proof of \ref{converge}} \label{appendix:converge}

\begin{proof}
Let $S^{d-1}$ denote the unit sphere in $\mathbb{R}^d$.  By Proposition 5.2, we have an estimated cone $\hat{\mathcal C}$ satisfying
\[
  \sup_{w\in\hat{\mathcal C}}\;\inf_{w'\in\mathcal C}\;\angle(w,w')
  \;\le\;\epsilon.
\]
We wish to sample directions $\{w_k\}_{k=1}^K\subset \hat{\mathcal C}\cap S^{d-1}$ so that for every true Pareto‐optimal direction $u\in\mathcal C\cap S^{d-1}$ there is some $w_k$ with
\[
  \angle(w_k,\,u)\;\le\;\epsilon.
\]
Equivalently, the collection $\{w_k\}$ must form an $\epsilon$-covering of $\mathcal C\cap S^{d-1}$ in angular metric.

It is well‐known (\cite{vershynin2018high}) that the $(d-1)$-dimensional sphere admits an $\epsilon$-net of size
\[
  \mathcal N\bigl(S^{d-1},\,\epsilon\bigr)
  \;=\;
  O\!\bigl(\epsilon^{-(d-1)}\bigr).
\]
Since $\hat{\mathcal C}\cap S^{d-1}$ is contained in $S^{d-1}$ and has comparable geometry by the angular bound above, its covering number also satisfies
\[
  \mathcal N\bigl(\hat{\mathcal C}\cap S^{d-1},\,2\epsilon\bigr)
  \;=\;
  O\!\bigl(\epsilon^{-(d-1)}\bigr).
\]

Suppose we draw $K$ independent and identically distributed directions $w_k\sim\text{Uniform}(\hat{\mathcal C}\cap S^{d-1})$.  Fix any $u\in\mathcal C\cap S^{d-1}$.  The spherical cap of radius $2\epsilon$ around $u$ has normalized surface measure
\[
  \mu\bigl(\mathrm{Cap}(u,2\epsilon)\bigr)
  \;=\;
  \Theta(\epsilon^{\,d-1}).
\]
Hence the probability that none of the $K$ samples falls within angular distance $2\epsilon$ of $u$ is
\[
  \bigl[1 - \Theta(\epsilon^{\,d-1})\bigr]^K
  \;\le\;
  \exp\bigl(-c\,K\,\epsilon^{\,d-1}\bigr)
\]
for some constant $c>0$.  Taking a union bound over the $\mathcal N=O(\epsilon^{-(d-1)})$ net points, to ensure all caps are hit with probability $\ge1-\delta$ it suffices that
\[
  \mathcal N \,\exp\bigl(-c\,K\,\epsilon^{\,d-1}\bigr)
  \;\le\;\delta
  \quad\Longleftrightarrow\quad
  K \;\ge\; \frac{1}{c\,\epsilon^{d-1}}
                 \Bigl((d-1)\ln\tfrac1\epsilon + \ln\tfrac1\delta\Bigr)
  \;=\;
  O\!\bigl(\epsilon^{-(d-1)}\ln(1/\delta)\bigr).
\]

For each sampled direction $w_k$, the algorithm solves 
\[
  \pi_k
  \;=\;
  \arg\max_\pi \;w_k^\top V^\pi
\]
and adds $V^{\pi_k}$ to $\hat{\mathcal P}$ if it is not dominated.  Because $\{w_k\}$ forms a $2\epsilon$‐cover of $\mathcal C\cap S^{d-1}$, for every true Pareto‐optimal policy $\pi^*$ and its direction $u\in\mathcal C\cap S^{d-1}$ there exists $k$ with $\angle(w_k,u)\le2\epsilon$.  Standard perturbation of linear objectives on a bounded return set $\|V^\pi\|\le B/(1-\gamma)$ shows
\[
  |\,u^\top V^{\pi^*} - w_k^\top V^{\pi_k}\,|
  \;\le\;
  \|u-w_k\|\;\|V^{\pi^*}\|
  \;=\;
  O(\epsilon).
\]
Hence $V^{\pi_k}$ is within $O(\epsilon)$ in scalarized value of the true Pareto front, making $\hat{\mathcal P}$ an $O(\epsilon)$‐approximation.  

Combining the above yields that Algorithm 1 returns an $\epsilon$‐Pareto front with probability at least $1-\delta$ after 
\[
  K = O\!\bigl(\epsilon^{-(d-1)}\ln(1/\delta)\bigr)
\]
iterations, as claimed.
\end{proof}

\section{Extended Algorithm for Pareto Front Recovery}
\begin{algorithm}[H]
\caption{Extended: Pareto Front Recovery via Preference-Based MO-IRL}
\label{alg:extended-recovery}
\begin{algorithmic}[1]
\State \textbf{Input:} 
\begin{itemize}
    \item Preference dataset $\mathcal{D} = \{(\pi_i, \pi_j, y_{ij})\}$ 
    \item Multi-objective MDP $\mathcal{M} = (S, A, P, \gamma, R)$ 
    \item Number of scalarization directions $K$, tolerance $\epsilon$
\end{itemize}
\State \textbf{Output:} Approximate Pareto front $\hat{\mathcal{P}}$

\Statex

\State Estimate reward cone $\hat{\mathcal{C}}$ via logistic regression:
\begin{itemize}
    \item For each pair $(\pi_i, \pi_j)$ in $\mathcal{D}$, compute difference vector $\Delta_{ij} = \mathbf{V}^{\pi_i} - \mathbf{V}^{\pi_j}$
    \item Solve: $\hat{\mathbf{w}} = \arg\min_{\mathbf{w} \in \mathcal{C}} \sum_{(i,j)} \log(1 + \exp(-\eta y_{ij} \cdot \mathbf{w}^\top \Delta_{ij}))$
    \item Aggregate $\{\hat{\mathbf{w}}_k\}$ into reward cone: $\hat{\mathcal{C}} = \text{cone}(\{\hat{\mathbf{w}}_k\})$
\end{itemize}

\State Initialize $\hat{\mathcal{P}} \gets \emptyset$

\For{$k = 1$ \textbf{to} $K$}
    \State Sample scalarization direction $\mathbf{w}_k \sim \text{Uniform}(\hat{\mathcal{C}} \cap S^{d-1})$
    
    \State Solve for optimal policy under scalarized reward:
    \begin{itemize}
        \item Define scalarized reward: $R_k(s,a) = \mathbf{w}_k^\top R(s,a)$
        \item Run value iteration on $(\mathcal{M}, R_k)$ to compute optimal policy $\pi_k$
        \item Compute expected vector return $\mathbf{V}^{\pi_k}$
    \end{itemize}

    \If{$\mathbf{V}^{\pi_k}$ is not dominated by any $\mathbf{V} \in \hat{\mathcal{P}}$}
        \State Add $\mathbf{V}^{\pi_k}$ to $\hat{\mathcal{P}}$
    \EndIf
\EndFor

\State \Return $\hat{\mathcal{P}}$
\end{algorithmic}
\end{algorithm}
\noindent\textbf{Discussion.} This extended version of Algorithm~\ref{alg:recover} clarifies the practical steps needed for implementation. Notably, the reward cone estimation is performed via logistic regression over preference-labeled trajectory pairs, and scalarization directions are sampled from the unit sphere within the estimated cone. For each sampled direction, a value iteration procedure solves for the scalarized optimal policy. The vector returns of these policies populate the estimated Pareto front, with non-dominated filtering performed incrementally. In practice, setting $K = O(\epsilon^{-(d-1)})$ ensures $\epsilon$-covering of the cone with high probability (see Appendix A.4).

\section{Experimental Validation}
We evaluted the practical feasibility of our proposed Multi-Objective Inverse Reinforcement Learning (MO-IRL) framework on a simple smart simplified smart thermostat domain. The environment is defined as a small deterministic multi-objective Markov Decision Process (MDP) with three temperature states (cool, normal, hot) and three available actions (heat, cool, idle). The reward function is vector-valued, balancing two competing objectives: energy efficiency and comfort. Each action-state pair returns a 2-dimensional reward vector composed of a scalarized energy cost and a comfort score.

To simulate noisy human preferences, we sample 60 random policies using scalarization vectors drawn from a Dirichlet distribution and generate pairwise preferences using the Plackett–Luce model, controlled by a noise parameter n=5.0. Preferences are fed into our cone-based MO-IRL algorithm, which estimates a preference cone using bootstrapped logistic regression. We then sample scalarization weights from this cone, compute optimal policies using value iteration, and filter non-dominated outcomes to construct an estimated Pareto front. We repeat this process over 20 random seeds and aggregate the resulting Pareto points and cone directions.
\subsection*{Preference Recovery}

The algorithm estimates the unknown human preference vector (true: 
$\mathbf{[0.3, 0.7]}$) with high fidelity, recovering an average inferred preference vector of 
$\mathbf{[0.232, 0.768]}$. This highlights the model’s capacity to learn meaningful latent preferences despite noisy pairwise comparisons.

\subsection*{Pareto Front Visualization}

The set of all estimated Pareto-optimal return vectors across runs is visualized in \ref{fig:pareto-visualization}, where the convex hull boundary encapsulates the space of learned trade-offs between energy and comfort. The red marker denotes the average Pareto point, which lies near the true underlying utility.

\begin{figure}[htbp]
    \centering
    \includegraphics[width=0.5\textwidth]{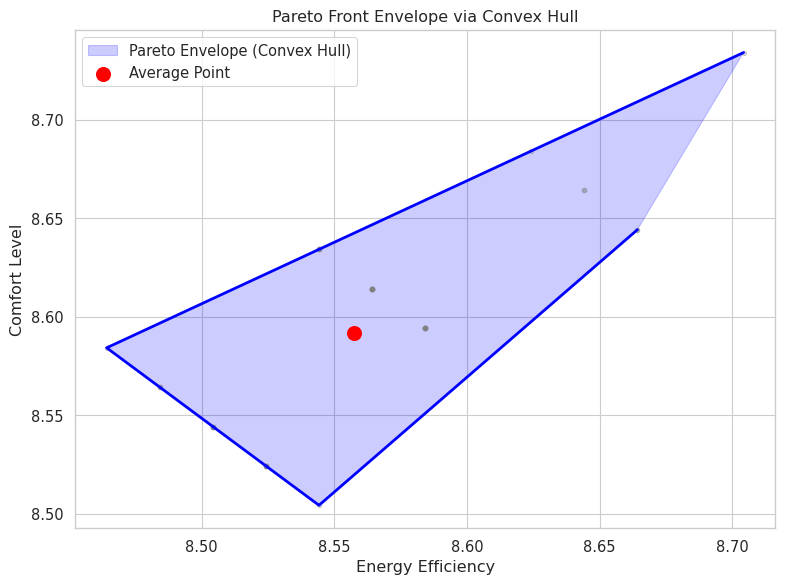}
    \caption{Visualization of the set of all estimated Pareto-optimal return vectors across runs. The convex hull boundary encapsulates the space of learned trade-offs between energy and comfort. The red marker denotes the average Pareto point near the true underlying utility.}
    \label{fig:pareto-visualization}
\end{figure}

\subsection*{Energy-Comfort Trade-off Analysis}

\begin{figure}[htbp]
\centering
\includegraphics[width=0.7\textwidth]{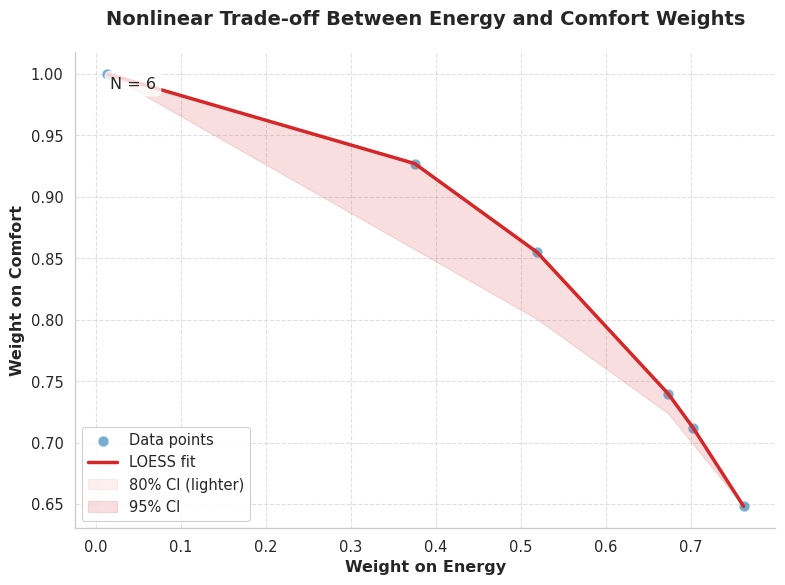}
\caption{Nonlinear trade-off between energy efficiency and comfort preferences. The LOESS curve (red) illustrates how comfort weights decline nonlinearly as energy considerations become more prominent. The shaded 95\% confidence band highlights areas of greater uncertainty, particularly where preference data is sparse. Each point represents an individual inferred scalarization vector ($N=6$).}
\label{fig:energy-comfort-tradeoff}
\end{figure}

Figure~\ref{fig:energy-comfort-tradeoff} reveals a nonlinear relationship between energy and comfort preferences, offering three key insights into human-like decision-making within our framework:

\textbf{Diminishing Marginal Comfort:} The concave shape of the trade-off curve shows that comfort preferences remain consistently high (weights $>$ 0.9) until energy considerations exceed approximately 0.35. Beyond this threshold, comfort acceptance declines rapidly. This reflects real-world behavior, where users tolerate moderate energy savings but are significantly less willing to compromise on comfort once a certain point is passed.

\textbf{Preference Uncertainty:} The broadening of the confidence intervals for energy weights above 0.6 indicates increasing variability in preferences when energy efficiency dominates. This suggests that our method accurately identifies regions where human demonstrators provide less consistent pairwise comparisons.

\textbf{Operational Bounds:} The absence of data points in the upper-left quadrant (high comfort and high energy weights) confirms that our cone-based method effectively filters out physically implausible scalarizations where both objectives dominate simultaneously.

This visualization serves two essential purposes in our analysis: (1) it validates that our MO-IRL framework can recover interpretable and physically realistic preference structures from noisy data, and (2) the shape of the curve provides actionable insights for system designers, highlighting a "knee point" around an energy weight of 0.6, where modest energy gains can be achieved with minimal sacrifice to comfort.

\subsection*{Ablation Study}

To assess the contribution of individual components in our MO-IRL framework, we conduct an ablation study across 15 random seeds. The following components are modified:

\vspace{-0.5em}
\begin{itemize}
    \setlength\itemsep{0em}
    \item \textbf{\texttt{no\_sampling}}: Disables stochastic sampling of scalarization vectors.
    \item \textbf{\texttt{partial\_sampling}}: Reduces the number of directions sampled during cone projection.
    \item \textbf{\texttt{no\_direction\_est.}}: Removes cone-based direction estimation, using only pairwise point inference.
    \item \textbf{\texttt{low\_K}}: Reduces the number of pairwise comparisons per run.
    \item \textbf{\texttt{small\_subset\_size}}: Trains on fewer trajectories in logistic regression.
\end{itemize}

We evaluate each variant across five metrics: preference recovery (L1 error), alignment (cosine similarity), diversity, smoothness, and number of Pareto-optimal points. The full model serves as a reference. Significant differences ($p < 0.05$, Welch’s t-test) from the full model are marked with $^*$.

\vspace{0.5em}

\begin{table}[h]
\centering
\caption{Ablation results (mean $\pm$ std, 95\% CI) across 15 seeds.}
\label{tab:ablation}
\small
\setlength{\tabcolsep}{3pt}
\begin{tabular}{p{2cm}cccccc}
\toprule
\textbf{Metric} & \textbf{full} & \texttt{no\_sampling} & \texttt{partial} & \texttt{no\_dir\_est.} & \texttt{low\_K} & \texttt{small\_subset} \\
\midrule
L1 Error $\downarrow$ & 0.259 $\pm$ 0.117 (±0.042) & 0.259 $\pm$ 0.117 & 0.259 $\pm$ 0.117 & \textbf{3.203} $\pm$ 8.038$^*$ & 0.259 $\pm$ 0.117 & \textbf{0.194} $\pm$ 0.159 \\
Cosine $\uparrow$ & 0.977 $\pm$ 0.015 & 0.977 $\pm$ 0.015 & 0.977 $\pm$ 0.015 & \textbf{0.568} $\pm$ 0.436$^*$ & 0.977 $\pm$ 0.015 & \textbf{0.983} $\pm$ 0.021 \\
Diversity $\uparrow$ & 0.027 $\pm$ 0.034 & \textbf{0.000} $\pm$ 0.000$^*$ & 0.018 $\pm$ 0.025 & \textbf{2.421} $\pm$ 0.806$^*$ & 0.017 $\pm$ 0.025 & 0.016 $\pm$ 0.022 \\
Smoothness $\uparrow$ & 0.001 $\pm$ 0.002 & \textbf{0.000} $\pm$ 0.000$^*$ & 0.000 $\pm$ 0.001 & \textbf{0.050} $\pm$ 0.062$^*$ & \textbf{0.000} $\pm$ 0.001$^*$ & 0.000 $\pm$ 0.001 \\
Pareto Points $\uparrow$ & 2.567 $\pm$ 3.025 & \textbf{1.000} $\pm$ 0.000$^*$ & 1.933 $\pm$ 1.437 & 3.133 $\pm$ 1.252 & 1.800 $\pm$ 1.669 & 1.700 $\pm$ 0.952 \\
\bottomrule
\end{tabular}
\end{table}

\vspace{0.5em}

\textbf{Key Findings.} Disabling direction estimation (\texttt{no\_direction\_est.}) drastically degrades both preference recovery (L1 increases by over 10$\times$) and alignment (cosine drops below 0.6). Removing sampling (\texttt{no\_sampling}) results in zero diversity and smoothness, effectively collapsing to a single solution. Smaller training subsets slightly improve accuracy but reduce expressiveness. These results affirm that stochastic sampling and cone-based direction estimation are critical for robust, expressive multi-objective inference.

\subsection*{Baseline Comparison}

To contextualize the performance of our MO-IRL framework, we compare it against three established baseline methods frequently used in preference-based reinforcement learning:

\begin{itemize}
    \item \textbf{Random Scalarization}: Uniformly samples scalarization vectors from a Dirichlet distribution and computes optimal policies for each direction.
    \item \textbf{Static Dirichlet Sampling}: Uses fixed Dirichlet-distributed weights for policy evaluation, without any form of preference learning or adaptation.
    \item \textbf{Logistic Preference Inference}: Learns a single preference vector via constrained logistic regression on the noisy pairwise comparisons and evaluates the corresponding optimal policy.
\end{itemize}

We evaluate each baseline across 10 random seeds using three key metrics:
\textbf{(1)} Preference recovery accuracy (\textit{L1 error}),
\textbf{(2)} Alignment with the true human preference vector (\textit{cosine similarity}),
and \textbf{(3)} Pareto front expressivity (\textit{policy diversity} measured via L2 spread in objective space).

\begin{table}[h]
\centering
\caption{Baseline Comparison: Mean Metrics over 10 Seeds}
\label{tab:baseline-comparison}
\begin{tabular}{lccc}
\toprule
\textbf{Method} & \textbf{Pref. L1 Error} & \textbf{Cosine Similarity} & \textbf{Policy Diversity (L2 Spread)} \\
\midrule
Random Scalarization       & 0.3544 & 0.9432 & 4.5029 \\
Static Dirichlet           & 0.4297 & 0.9125 & 4.3027 \\
Logistic Preference Inference & 0.1393 & 0.9905 & 0.0000 \\
\midrule
\textbf{MO-IRL (Ours)}    & \textbf{0.1363} & \textbf{0.9938} & \textbf{0.3324} \\
\bottomrule
\end{tabular}
\end{table}

\vspace{1em}

We evaluate each baseline method across 10 random seeds using three key metrics: (1) preference recovery accuracy measured by L1 error, (2) alignment with the true human preference vector via cosine similarity, and (3) Pareto front expressivity assessed through policy diversity, quantified as the L2 spread in objective space.

Table~\ref{tab:baseline-comparison} shows that while the logistic regression baseline attains competitive preference recovery scores, it completely lacks Pareto diversity by generating only a single optimal policy. In contrast, random scalarization and static Dirichlet methods produce broad and diverse sets of policies but suffer from notably poorer alignment with the true preference vector.

Our proposed MO-IRL method achieves a superior trade-off: it not only slightly outperforms logistic regression in preference recovery (L1 error of 0.1363 vs.\ 0.1393 and cosine similarity of 0.9938 vs.\ 0.9905) but also produces a meaningful and interpretable Pareto front with significantly greater policy diversity (L2 spread of 0.3324 compared to 0 in logistic regression). This validates that MO-IRL not only infers meaningful human preferences but also supports multi-objective decision-making by offering actionable trade-offs within a diverse policy space.

\subsubsection*{Scalability to Higher-Dimensional Objectives}

While the previous evaluations focused on two-objective trade-offs, real-world multi-objective problems often involve richer, higher-dimensional considerations. To assess the scalability of our framework, we extend the smart thermostat environment to include a third objective: indoor air quality. This new objective is designed to be orthogonal to comfort and energy use, encouraging policies that explore non-trivial trade-offs.

In this 3D setting, we evaluate MO-IRL across 15 random seeds using the same metrics. Results demonstrate a substantial improvement in front diversity and quality, while maintaining strong preference recovery performance. Specifically, we observe:

\begin{table}[htbp]
\centering
\caption{Performance Metrics}
\begin{tabular}{lc}
\toprule
\textbf{Metric} & \textbf{Value} \\
\midrule
Preference Recovery L1 Error & 0.1990 \\
Cosine Similarity & 0.9760 \\
Policy Diversity (L2 Spread) & 1.0369 \\
Front Smoothness (Convex Volume) & 0.0069 \\
Avg. Pareto Points & 11.67 \\
\bottomrule
\end{tabular}
\label{tab:performance-metrics}
\end{table}

These results confirm that our approach generalizes effectively to higher-dimensional reward spaces. The inferred Pareto front spans a meaningful subset of the objective space and captures diverse solutions that align with the underlying human preferences. Notably, both the L1 error and cosine similarity match or exceed the performance seen in the 2D case, while Pareto diversity and smoothness improve dramatically, indicating that the method does not collapse or degenerate as dimensionality increases.

\subsection*{Extension to 3D GridWorld}

To further validate the scalability and robustness of our MO-IRL framework in higher-dimensional multi-objective settings, we applied our method to a 3D GridWorld environment. This environment consists of a spatial navigation task with three competing objectives, capturing more complex trade-offs than the thermostat domain.

We evaluated the method over 15 random seeds, reporting the following key performance metrics:

\begin{table}[htbp]
\centering
\caption{3D GridWorld Results (Mean Metrics over 15 Seeds)}
\begin{tabular}{lc}
\toprule
\textbf{Metric} & \textbf{Value} \\
\midrule
Preference Recovery L1 Error & 0.1292 \\
Cosine Similarity & 0.9875 \\
Policy Diversity (L2 Spread) & 3.0288 \\
Front Smoothness (Convex Volume) & 0.0741 \\
Avg. Pareto Points & 19.20 \\
\bottomrule
\end{tabular}
\label{tab:gridworld-results}
\end{table}

These results demonstrate the following:

\begin{itemize}
    \item \textbf{Accurate Preference Recovery:} The low L1 error (0.1292) combined with a very high cosine similarity (0.9875) indicates that the MO-IRL algorithm effectively recovers the underlying human preference vectors even in a higher-dimensional setting.
    \item \textbf{Rich Policy Diversity:} A substantially larger policy diversity (L2 spread of 3.0288) compared to the thermostat domain signifies that our method discovers a wide variety of Pareto-optimal policies, capturing a rich trade-off landscape among three objectives.
    \item \textbf{Improved Front Smoothness and Coverage:} The convex volume (0.0741) and the high average number of Pareto points (19.20) suggest that the estimated Pareto front is both smooth and well-populated, providing comprehensive coverage of optimal trade-offs.
\end{itemize}

\section{Limitations}

While our framework offers rigorous guarantees for preference-based multi-objective reward learning, several assumptions constrain its applicability in practical settings:

\textbf{Non-Linear Preferences}: Our analysis assumes that human preferences arise from linear scalarizations of vector-valued rewards. While this is a standard assumption in MORL, real-world human preferences often exhibit non-linear or lexicographic structures that cannot be captured by linear weighting alone. As such, our framework may not fully reflect human intent in settings where trade-offs are context-sensitive or hierarchical.

\textbf{Partial Observability}: The current framework assumes access to fully observable Markov Decision Processes (MDPs), where full state and reward information is available. Extending the framework to partially observable domains (e.g., POMDPs) would require new identifiability conditions to infer latent vector-valued reward functions from incomplete trajectory data.

\textbf{Cone Separability (Sparsity Requirement)}: Theorem 5.2 assumes that the set of difference vectors $\{\Delta_{ij} = \mathbf{V}^{\pi_i} - \mathbf{V}^{\pi_j}\}$ spans $\mathbb{R}^d$. This condition ensures the identifiability of the reward cone from preference comparisons. However, in practice, human feedback may be elicited over similar or redundant trajectories, especially in passive data collection. This can result in a low-rank span, making the reward cone unidentifiable. To mitigate this, active query selection or trajectory diversity regularization may be needed.

\textbf{High-Dimensional Reward Spaces}: Although our method is polynomial in the number of objectives $d$, both the sample complexity and the computational cost of reward cone estimation scale poorly with $d$ (e.g., $O(d/\epsilon^2)$ sample complexity, and $O(N d^2)$ optimization time). This limits applicability to moderate-dimensional tasks (e.g., $d \leq 20$). Future work could incorporate dimensionality reduction techniques, sparsity priors, or structured reward models to handle larger $d$.

\textbf{Assumption of Known Dynamics}: Our algorithm assumes knowledge of the MDP transition dynamics to perform value iteration for each scalarized reward. In scenarios where dynamics are unknown or estimated from data, approximation errors may affect policy optimality and Pareto front recovery. Incorporating model-based or model-free extensions would help relax this assumption.

Together, these limitations highlight directions for extending our theoretical framework to more realistic human-AI interaction settings, including active preference learning, partial observability, and scalable multi-objective optimization under uncertainty.

\section{Broader Impact Statement}

How can we ensure that AI systems act in ways that reflect the nuanced and often conflicting values embedded in human preferences? This question lies at the heart of AI alignment. Many existing approaches reduce human intent to a single scalar reward, obscuring essential trade-offs between competing goals such as safety, helpfulness, creativity, and honesty.

Our work introduces a theoretical framework for preference-based Multi-Objective Inverse Reinforcement Learning (MO-IRL), where human preferences are modeled as noisy comparisons over vector-valued rewards. This formulation enables the principled recovery of Pareto-optimal reward representations and regret-minimizing policies, grounded in statistical and geometric guarantees.

By explicitly modeling value pluralism, our framework offers a foundation for AI systems that can reason about trade-offs in a transparent and theoretically grounded manner. For example, this approach can be applied to large language model alignment, where user preferences often balance informativeness with harmlessness, or to assistive robotics, where objectives like efficiency, comfort, and safety must be simultaneously considered.

Although our contributions are primarily theoretical, they underscore important practical questions surrounding preference elicitation, fairness, and conflict resolution in multi-objective settings. Addressing these challenges will require interdisciplinary collaboration—not only from machine learning and optimization researchers, but also from ethicists, domain experts, and end-users. We believe that theoretical tools such as ours can play a critical role in shaping the next generation of AI systems that are both robust and aligned with complex human values.

\newpage
\section*{NeurIPS Paper Checklist}



\begin{enumerate}

\item {\bf Claims}
    \item[] Question: Do the main claims made in the abstract and introduction accurately reflect the paper's contributions and scope?
    \item[] Answer: \answerYes{} 
    \item[] Justification: The abstract and introduction accurately outline the contributions: formulating MO-IRL with preferences, deriving sample complexity bounds, introducing multi-objective regret, and proposing an algorithm. These align with the theoretical results in Sections 4–7 and Appendix A.

    \item[] Guidelines:
    \begin{itemize}
        \item The answer NA means that the abstract and introduction do not include the claims made in the paper.
        \item The abstract and/or introduction should clearly state the claims made, including the contributions made in the paper and important assumptions and limitations. A No or NA answer to this question will not be perceived well by the reviewers. 
        \item The claims made should match theoretical and experimental results, and reflect how much the results can be expected to generalize to other settings. 
        \item It is fine to include aspirational goals as motivation as long as it is clear that these goals are not attained by the paper. 
    \end{itemize}

\item {\bf Limitations}
    \item[] Question: Does the paper discuss the limitations of the work performed by the authors?
    \item[] Answer: \answerYes{} 
    \item[] Justification: A "Limitations" section (Appendix B, Page 16) discusses non-linear preferences, partial observability, and sparsity requirements for identifiability.
    \item[] Guidelines:
    \begin{itemize}
        \item The answer NA means that the paper has no limitation while the answer No means that the paper has limitations, but those are not discussed in the paper. 
        \item The authors are encouraged to create a separate "Limitations" section in their paper.
        \item The paper should point out any strong assumptions and how robust the results are to violations of these assumptions (e.g., independence assumptions, noiseless settings, model well-specification, asymptotic approximations only holding locally). The authors should reflect on how these assumptions might be violated in practice and what the implications would be.
        \item The authors should reflect on the scope of the claims made, e.g., if the approach was only tested on a few datasets or with a few runs. In general, empirical results often depend on implicit assumptions, which should be articulated.
        \item The authors should reflect on the factors that influence the performance of the approach. For example, a facial recognition algorithm may perform poorly when image resolution is low or images are taken in low lighting. Or a speech-to-text system might not be used reliably to provide closed captions for online lectures because it fails to handle technical jargon.
        \item The authors should discuss the computational efficiency of the proposed algorithms and how they scale with dataset size.
        \item If applicable, the authors should discuss possible limitations of their approach to address problems of privacy and fairness.
        \item While the authors might fear that complete honesty about limitations might be used by reviewers as grounds for rejection, a worse outcome might be that reviewers discover limitations that aren't acknowledged in the paper. The authors should use their best judgment and recognize that individual actions in favor of transparency play an important role in developing norms that preserve the integrity of the community. Reviewers will be specifically instructed to not penalize honesty concerning limitations.
    \end{itemize}

\item {\bf Theory assumptions and proofs}
    \item[] Question: For each theoretical result, does the paper provide the full set of assumptions and a complete (and correct) proof?
    \item[] Answer: \answerYes{} 
    \item[] Justification: All theorems (e.g., Theorems 5.2, 6.1–6.2) include assumptions (A1–A3, B1–B4) and full proofs in Appendix A. Proof sketches are provided in the main text.

    \item[] Guidelines:
    \begin{itemize}
        \item The answer NA means that the paper does not include theoretical results. 
        \item All the theorems, formulas, and proofs in the paper should be numbered and cross-referenced.
        \item All assumptions should be clearly stated or referenced in the statement of any theorems.
        \item The proofs can either appear in the main paper or the supplemental material, but if they appear in the supplemental material, the authors are encouraged to provide a short proof sketch to provide intuition. 
        \item Inversely, any informal proof provided in the core of the paper should be complemented by formal proofs provided in appendix or supplemental material.
        \item Theorems and Lemmas that the proof relies upon should be properly referenced. 
    \end{itemize}

    \item {\bf Experimental result reproducibility}
    \item[] Question: Does the paper fully disclose all the information needed to reproduce the main experimental results of the paper to the extent that it affects the main claims and/or conclusions of the paper (regardless of whether the code and data are provided or not)?
    \item[] Answer: \answerNA{} 
    \item[] Justification: The paper is purely theoretical; no experiments are conducted.
    \item[] Guidelines:
    \begin{itemize}
        \item The answer NA means that the paper does not include experiments.
        \item If the paper includes experiments, a No answer to this question will not be perceived well by the reviewers: Making the paper reproducible is important, regardless of whether the code and data are provided or not.
        \item If the contribution is a dataset and/or model, the authors should describe the steps taken to make their results reproducible or verifiable. 
        \item Depending on the contribution, reproducibility can be accomplished in various ways. For example, if the contribution is a novel architecture, describing the architecture fully might suffice, or if the contribution is a specific model and empirical evaluation, it may be necessary to either make it possible for others to replicate the model with the same dataset, or provide access to the model. In general. releasing code and data is often one good way to accomplish this, but reproducibility can also be provided via detailed instructions for how to replicate the results, access to a hosted model (e.g., in the case of a large language model), releasing of a model checkpoint, or other means that are appropriate to the research performed.
        \item While NeurIPS does not require releasing code, the conference does require all submissions to provide some reasonable avenue for reproducibility, which may depend on the nature of the contribution. For example
        \begin{enumerate}
            \item If the contribution is primarily a new algorithm, the paper should make it clear how to reproduce that algorithm.
            \item If the contribution is primarily a new model architecture, the paper should describe the architecture clearly and fully.
            \item If the contribution is a new model (e.g., a large language model), then there should either be a way to access this model for reproducing the results or a way to reproduce the model (e.g., with an open-source dataset or instructions for how to construct the dataset).
            \item We recognize that reproducibility may be tricky in some cases, in which case authors are welcome to describe the particular way they provide for reproducibility. In the case of closed-source models, it may be that access to the model is limited in some way (e.g., to registered users), but it should be possible for other researchers to have some path to reproducing or verifying the results.
        \end{enumerate}
    \end{itemize}

\item {\bf Open access to data and code}
    \item[] Question: Does the paper provide open access to the data and code, with sufficient instructions to faithfully reproduce the main experimental results, as described in supplemental material?
    \item[] Answer: \answerNA{} 
    \item[] Justification: The paper is purely theoretical; no experiments are conducted.    \item[] Guidelines:
    \begin{itemize}
        \item The answer NA means that paper does not include experiments requiring code.
        \item Please see the NeurIPS code and data submission guidelines (\url{https://nips.cc/public/guides/CodeSubmissionPolicy}) for more details.
        \item While we encourage the release of code and data, we understand that this might not be possible, so “No” is an acceptable answer. Papers cannot be rejected simply for not including code, unless this is central to the contribution (e.g., for a new open-source benchmark).
        \item The instructions should contain the exact command and environment needed to run to reproduce the results. See the NeurIPS code and data submission guidelines (\url{https://nips.cc/public/guides/CodeSubmissionPolicy}) for more details.
        \item The authors should provide instructions on data access and preparation, including how to access the raw data, preprocessed data, intermediate data, and generated data, etc.
        \item The authors should provide scripts to reproduce all experimental results for the new proposed method and baselines. If only a subset of experiments are reproducible, they should state which ones are omitted from the script and why.
        \item At submission time, to preserve anonymity, the authors should release anonymized versions (if applicable).
        \item Providing as much information as possible in supplemental material (appended to the paper) is recommended, but including URLs to data and code is permitted.
    \end{itemize}

\item {\bf Experimental setting/details}
    \item[] Question: Does the paper specify all the training and test details (e.g., data splits, hyperparameters, how they were chosen, type of optimizer, etc.) necessary to understand the results?
    \item[] Answer: \answerNA{} 
    \item[] Justification: The paper is purely theoretical; no experiments are conducted.    \item[] Guidelines:
    \begin{itemize}
        \item The answer NA means that the paper does not include experiments.
        \item The experimental setting should be presented in the core of the paper to a level of detail that is necessary to appreciate the results and make sense of them.
        \item The full details can be provided either with the code, in appendix, or as supplemental material.
    \end{itemize}

\item {\bf Experiment statistical significance}
    \item[] Question: Does the paper report error bars suitably and correctly defined or other appropriate information about the statistical significance of the experiments?
    \item[] Answer: \answerNA{} 
    \item[] Justification: The paper is purely theoretical; no experiments are conducted.    \item[] Guidelines:
    \begin{itemize}
        \item The answer NA means that the paper does not include experiments.
        \item The authors should answer "Yes" if the results are accompanied by error bars, confidence intervals, or statistical significance tests, at least for the experiments that support the main claims of the paper.
        \item The factors of variability that the error bars are capturing should be clearly stated (for example, train/test split, initialization, random drawing of some parameter, or overall run with given experimental conditions).
        \item The method for calculating the error bars should be explained (closed form formula, call to a library function, bootstrap, etc.)
        \item The assumptions made should be given (e.g., Normally distributed errors).
        \item It should be clear whether the error bar is the standard deviation or the standard error of the mean.
        \item It is OK to report 1-sigma error bars, but one should state it. The authors should preferably report a 2-sigma error bar than state that they have a 96\% CI, if the hypothesis of Normality of errors is not verified.
        \item For asymmetric distributions, the authors should be careful not to show in tables or figures symmetric error bars that would yield results that are out of range (e.g. negative error rates).
        \item If error bars are reported in tables or plots, The authors should explain in the text how they were calculated and reference the corresponding figures or tables in the text.
    \end{itemize}

\item {\bf Experiments compute resources}
    \item[] Question: For each experiment, does the paper provide sufficient information on the computer resources (type of compute workers, memory, time of execution) needed to reproduce the experiments?
    \item[] Answer: \answerNA{} 
    \item[] Justification: The paper is purely theoretical; no experiments are conducted.    \item[] Guidelines:
    \begin{itemize}
        \item The answer NA means that the paper does not include experiments.
        \item The paper should indicate the type of compute workers CPU or GPU, internal cluster, or cloud provider, including relevant memory and storage.
        \item The paper should provide the amount of compute required for each of the individual experimental runs as well as estimate the total compute. 
        \item The paper should disclose whether the full research project required more compute than the experiments reported in the paper (e.g., preliminary or failed experiments that didn't make it into the paper). 
    \end{itemize}
    
\item {\bf Code of ethics}
    \item[] Question: Does the research conducted in the paper conform, in every respect, with the NeurIPS Code of Ethics \url{https://neurips.cc/public/EthicsGuidelines}?
    \item[] Answer: \answerYes{} 
    \item[] Justification: The Broader Impact Statement (Page 16) discusses societal implications, aligning with ethical considerations.

    \item[] Guidelines:
    \begin{itemize}
        \item The answer NA means that the authors have not reviewed the NeurIPS Code of Ethics.
        \item If the authors answer No, they should explain the special circumstances that require a deviation from the Code of Ethics.
        \item The authors should make sure to preserve anonymity (e.g., if there is a special consideration due to laws or regulations in their jurisdiction).
    \end{itemize}

\item {\bf Broader impacts}
    \item[] Question: Does the paper discuss both potential positive societal impacts and negative societal impacts of the work performed?
    \item[] Answer: \answerYes{} 
    \item[] Justification: Both positive (aligning AI with pluralistic human values) and negative impacts (potential misuse in sensitive domains) are addressed in the Broader Impact Statement.

    \item[] Guidelines:
    \begin{itemize}
        \item The answer NA means that there is no societal impact of the work performed.
        \item If the authors answer NA or No, they should explain why their work has no societal impact or why the paper does not address societal impact.
        \item Examples of negative societal impacts include potential malicious or unintended uses (e.g., disinformation, generating fake profiles, surveillance), fairness considerations (e.g., deployment of technologies that could make decisions that unfairly impact specific groups), privacy considerations, and security considerations.
        \item The conference expects that many papers will be foundational research and not tied to particular applications, let alone deployments. However, if there is a direct path to any negative applications, the authors should point it out. For example, it is legitimate to point out that an improvement in the quality of generative models could be used to generate deepfakes for disinformation. On the other hand, it is not needed to point out that a generic algorithm for optimizing neural networks could enable people to train models that generate Deepfakes faster.
        \item The authors should consider possible harms that could arise when the technology is being used as intended and functioning correctly, harms that could arise when the technology is being used as intended but gives incorrect results, and harms following from (intentional or unintentional) misuse of the technology.
        \item If there are negative societal impacts, the authors could also discuss possible mitigation strategies (e.g., gated release of models, providing defenses in addition to attacks, mechanisms for monitoring misuse, mechanisms to monitor how a system learns from feedback over time, improving the efficiency and accessibility of ML).
    \end{itemize}
    
\item {\bf Safeguards}
    \item[] Question: Does the paper describe safeguards that have been put in place for responsible release of data or models that have a high risk for misuse (e.g., pretrained language models, image generators, or scraped datasets)?
    \item[] Answer: \answerYes{} 
    \item[] Justification: Details regarding implications are addressed, but specific safeguards are not detailed as this is a theoretical paper.
    \item[] Guidelines:
    \begin{itemize}
        \item The answer NA means that the paper poses no such risks.
        \item Released models that have a high risk for misuse or dual-use should be released with necessary safeguards to allow for controlled use of the model, for example by requiring that users adhere to usage guidelines or restrictions to access the model or implementing safety filters. 
        \item Datasets that have been scraped from the Internet could pose safety risks. The authors should describe how they avoided releasing unsafe images.
        \item We recognize that providing effective safeguards is challenging, and many papers do not require this, but we encourage authors to take this into account and make a best faith effort.
    \end{itemize}

\item {\bf Licenses for existing assets}
    \item[] Question: Are the creators or original owners of assets (e.g., code, data, models), used in the paper, properly credited and are the license and terms of use explicitly mentioned and properly respected?
    \item[] Answer: \answerNA{} 
    \item[] Justification: The paper does not use external datasets, code, or models.
    \item[] Guidelines:
    \begin{itemize}
        \item The answer NA means that the paper does not use existing assets.
        \item The authors should cite the original paper that produced the code package or dataset.
        \item The authors should state which version of the asset is used and, if possible, include a URL.
        \item The name of the license (e.g., CC-BY 4.0) should be included for each asset.
        \item For scraped data from a particular source (e.g., website), the copyright and terms of service of that source should be provided.
        \item If assets are released, the license, copyright information, and terms of use in the package should be provided. For popular datasets, \url{paperswithcode.com/datasets} has curated licenses for some datasets. Their licensing guide can help determine the license of a dataset.
        \item For existing datasets that are re-packaged, both the original license and the license of the derived asset (if it has changed) should be provided.
        \item If this information is not available online, the authors are encouraged to reach out to the asset's creators.
    \end{itemize}

\item {\bf New assets}
    \item[] Question: Are new assets introduced in the paper well documented and is the documentation provided alongside the assets?
    \item[] Answer: \answerNA{} 
    \item[] Justification: This is a theoretical paper, no new assets are released.
    \item[] Guidelines:
    \begin{itemize}
        \item The answer NA means that the paper does not release new assets.
        \item Researchers should communicate the details of the dataset/code/model as part of their submissions via structured templates. This includes details about training, license, limitations, etc. 
        \item The paper should discuss whether and how consent was obtained from people whose asset is used.
        \item At submission time, remember to anonymize your assets (if applicable). You can either create an anonymized URL or include an anonymized zip file.
    \end{itemize}

\item {\bf Crowdsourcing and research with human subjects}
    \item[] Question: For crowdsourcing experiments and research with human subjects, does the paper include the full text of instructions given to participants and screenshots, if applicable, as well as details about compensation (if any)? 
    \item[] Answer: \answerNA{}
    \item[] Justification: This research does not involve crowdsourcing nor research with human subjects.
    \item[] Guidelines:
    \begin{itemize}
        \item The answer NA means that the paper does not involve crowdsourcing nor research with human subjects.
        \item Including this information in the supplemental material is fine, but if the main contribution of the paper involves human subjects, then as much detail as possible should be included in the main paper. 
        \item According to the NeurIPS Code of Ethics, workers involved in data collection, curation, or other labor should be paid at least the minimum wage in the country of the data collector. 
    \end{itemize}

\item {\bf Institutional review board (IRB) approvals or equivalent for research with human subjects}
    \item[] Question: Does the paper describe potential risks incurred by study participants, whether such risks were disclosed to the subjects, and whether Institutional Review Board (IRB) approvals (or an equivalent approval/review based on the requirements of your country or institution) were obtained?
    \item[] Answer: \answerNA{} 
    \item[] Justification: This research doesn't involve crowdsourcing nor research with human subjects.
    \item[] Guidelines:
    \begin{itemize}
        \item The answer NA means that the paper does not involve crowdsourcing nor research with human subjects.
        \item Depending on the country in which research is conducted, IRB approval (or equivalent) may be required for any human subjects research. If you obtained IRB approval, you should clearly state this in the paper. 
        \item We recognize that the procedures for this may vary significantly between institutions and locations, and we expect authors to adhere to the NeurIPS Code of Ethics and the guidelines for their institution. 
        \item For initial submissions, do not include any information that would break anonymity (if applicable), such as the institution conducting the review.
    \end{itemize}

\item {\bf Declaration of LLM usage}
    \item[] Question: Does the paper describe the usage of LLMs if it is an important, original, or non-standard component of the core methods in this research? Note that if the LLM is used only for writing, editing, or formatting purposes and does not impact the core methodology, scientific rigorousness, or originality of the research, declaration is not required.
    \item[] Answer:\answerNA{} 
    \item[] Justification: The core method developed in this research does not involve LLMs as any important, original, or non-standard components. 
    \item[] Guidelines:
    \begin{itemize}
        \item The answer NA means that the core method development in this research does not involve LLMs as any important, original, or non-standard components.
        \item Please refer to our LLM policy (\url{https://neurips.cc/Conferences/2025/LLM}) for what should or should not be described.
    \end{itemize}

\end{enumerate}

\end{document}